\relax
%File: formatting-instruction.tex
\documentclass[letterpaper]{article}
\usepackage{aaai18}
\usepackage{times}
\usepackage{helvet}
\usepackage{courier}
\usepackage{dsfont}
\usepackage{bbm}

\frenchspacing
\setlength{\pdfpagewidth}{8.5in}
\setlength{\pdfpageheight}{11in}
\pdfinfo{
/Title (Relational Marginal Problems: Theory and Estimation)
/Author (Ondrej Kuzelka, Yuyi Wang, Jesse Davis, Steven Schockaert)}
\setcounter{secnumdepth}{0}  

%\frenchspacing

\usepackage{amssymb,amsmath,amsthm}
\usepackage{xspace}
\usepackage{amsthm}
\usepackage{amsmath}
\usepackage{amssymb}
\usepackage{mathrsfs}

\usepackage{tikz}
\usetikzlibrary{positioning}
\usetikzlibrary{decorations.text}
\usetikzlibrary{arrows}
\usetikzlibrary{shapes}

\usepackage{algorithm,algorithmicx,algpseudocode}

\usepackage{amsmath,amssymb,mathtools,subfigure}

\usepackage{url}
\usepackage{xcolor}
\newcommand{\red}[1]{#1}

\newcommand{\blue}[1]{#1}

\newcommand{\cA}[0]{\mathcal{A}}
\newcommand{\cC}[0]{\mathcal{C}}
\newcommand{\cS}[0]{\mathcal{S}}
\newcommand{\expect}[1]{\mathbb{E}\left[ #1 \right]}

\newcommand{\alice}[0]{\textit{alice}}
\newcommand{\bob}[0]{\textit{bob}}
\newcommand{\eve}[0]{\textit{eve}}
\newcommand{\sm}[0]{\textit{sm}}
\newcommand{\fr}[0]{\textit{fr}}

\newtheorem{theorem}{Theorem}
\newtheorem{lemma}{Lemma}
\newtheorem{proposition}[theorem]{Proposition}
\newtheorem{definition}{Definition}
\newtheorem{observation}[theorem]{Remark}
\newtheorem{example}[theorem]{Example}

\newcommand\myeq{\stackrel{\mathclap{\normalfont\mbox{\small{def}}}}{=}}

\newif\ifappendix
\appendixtrue

\newif\ifmaintext
\maintexttrue

\allowdisplaybreaks

 \begin{document}
% The file aaai.sty is the style file for AAAI Press 
% proceedings, working notes, and technical reports.
%
\title{Relational Marginal Problems: Theory and Estimation}
% \author{} 
\author{Ond\v{r}ej Ku\v{z}elka\\
Cardiff University, UK\\
KuzelkaO@cardiff.ac.uk \\
 \And Yuyi Wang\\
ETH Zurich, Switzerland\\
yuwang@ethz.ch\\
\And
Jesse Davis\\
KU Leuven, Belgium\\
jesse.davis@cs.kuleuven.be\\
\And
Steven Schockaert\\
Cardiff University, UK\\
SchockaertS1@cardiff.ac.uk
}
\maketitle

\ifmaintext

\begin{abstract}
In the propositional setting, the {\em marginal problem} is to find a (maximum-entropy) distribution that has some given marginals. We study this problem in a relational setting and make the following contributions. First, we compare two different notions of relational marginals. Second, we show a duality between the resulting relational marginal problems and the maximum likelihood estimation of the parameters of relational models, which generalizes a well-known duality from the propositional setting. Third, by exploiting the relational marginal formulation, we present a statistically sound method to learn the parameters of relational models that will be applied in settings where the number of constants differs between the training and test data. Furthermore, based on a relational generalization of {\em marginal polytopes}, we characterize cases where the standard estimators based on feature's number of true groundings needs to be adjusted and we quantitatively characterize the consequences of these adjustments. Fourth, we prove bounds on expected errors of the estimated parameters, which allows us to lower-bound, among other things, the effective sample size of relational training data.
\end{abstract}

\section{Introduction}

Statistical Relational Learning (SRL, Getoor and Taskar, eds., \citeyear{srl}) is concerned with learning probabilistic models from relational data.
Many popular SRL frameworks, such as Markov Logic Networks (MLNs, Richardson and Domingos \citeyear{Richardson2006}), use weighted logical formulas to encode statistical regularities that hold for the considered problem. 
%The weights of these formulas are usually estimated from training data using maximum (pseudo-)likelihood estimation, typically from a single large example (e.g.\ a social network). 
Typically, the maximum (pseudo-)likelihood weights of the formulas are estimated from training data, which is usually a single large example (e.g.\ a social network). 
This is problematic for two reasons. First, the weights that are learned from this single training example are in general not optimal for examples of different sizes \blue{\cite{JainKB07}}. This turns out to be a fundamental problem, which cannot simply be solved by rescaling the weights \cite{shalizi2013consistency}. Second, without making further assumptions, it is difficult to provide any statistical guarantees about the learned weights.

%The typical learning scenario in SRL is learning from a single large example (e.g. a social network). One problem with this approach is that the weights learned on the training data do not have to be optimal when applying the learned model on a domain of different size. In fact, it turns out that this is a fundamental problem for which there is no simple remedy, such as, for instance, rescaling the weights, as shown in the work \cite{shalizi}.

In this paper, we approach parameter estimation in SRL from a novel direction, by introducing the notion of a {\em relational marginal problem}. In the propositional case \cite{wainwright}, marginal problems entail finding a maximum-entropy distribution which has the given marginal probabilities. A well-known property of such problems is that they are dual to the maximum-likelihood estimation of the parameters of an undirected graphical model (where ``dual'' is in the sense of convex optimization). In relational marginal problems, we are similarly looking for a maximum-entropy distribution which satisfies some given statistics -- {\em relational marginals}. However, we also need to define what these relational marginals are. Thus, first, we describe two different types of {\em relational marginals}, which differ in the kinds of statistics that are provided.\footnote{\blue{All statistics in this paper are based on universally quantified formulas; we do not consider formulas with existential quantifiers.}} The first type is based on relational marginal distributions \cite{kuzelka.ijcai.17} and the second is based on Halpern-style random substitution semantics \cite{bacchus_halpern_koller}. Second, for both types of statistics, we establish a relational counterpart of the duality between maximum-likelihood estimation and max-entropy marginal problems. Interestingly, for the latter model, the corresponding dual is MLNs. % as the corresponding dual. %For the former model, we obtain a dual that is also somewhat similar to MLNs.

Third, the relational marginal perspective allows us to learn parameters for domains that have different sizes (i.e., number of constants) than the training data. The basic idea to achieve this is simple. We assume that the training data is a sample of the data that we want to model.
For example, imagine trying to model all of Facebook based on a sampled subset of Facebook users along with all relations among them. 
Assuming the sample is a large enough and was obtained in a suitable way (which is not always the case in practice -- we discuss this issue later), the parameters of the marginals estimated from the sample should be close to the respective parameters for the whole network. Then, instead of using a model learned by optimizing the likelihood on the training data, we use a model obtained as a solution of the corresponding relational marginal problem with a domain of the required size. We may end up with estimated parameters for which the relational marginal problem has no solution. Therefore, we propose a method for adjusting the estimated parameters that enables a solution and characterize its effect on the estimates. Then we also \blue{introduce} the {\em relational marginal polytopes}, which allows us to provide conditions under which the unbiased unadjusted estimate will be valid (``realizable'') for domains of any size.

In addition, the relational marginal view of the  parameter learning problem, can be thought of as consisting of two decoupled problems: estimation of the  parameters of marginals and optimization to obtain the max-entropy distributions. Thus, to better understand parameter learning from relational data, it is important to characterize how accurate the estimates are. Assuming that all subsamples of the data being modeled are sampled with the same probability, we derive bounds on expected error, that is, the expected difference between the parameters obtained from the subsample and parameters that could be theoretically computed if the whole dataset were accessible (e.g. the whole Facebook). From this, we can also obtain lower-bounds on the effective sample size for relational data. 

The paper is structured around addressing the following four questions about relational marginal problems: 

\begin{enumerate}
    \item What should the relational marginals be? (Section {\em Two Types of Relational Marginals})
    \item What are the max-entropy distributions with the given relational marginals, and how can we find them? (Section {\em Max-Entropy Models})
    \item When are relational marginal problems realizable, and how can we adjust them when they are not? How can we adjust learning to account for differences in domain sizes? (Section {\em Realizability})
    \item How accurate are the parameter estimates of relational marginals, and what are the links with realizability? (Sections {\em Relational Marginal Polytopes} and {\em Estimation})
\end{enumerate}

\noindent {\em Proofs of all propositions stated in the main text and details of the duality derivations are contained in the \blue{online appendix}.\footnote{\blue{{\tt https://arxiv.org/abs/1709.05825}}}}

\section{Preliminaries}

This paper considers a function-free first-order logic language $\mathcal{L}$, which is built from a set of constants $\textit{Const}$, variables $\textit{Var}$ and predicates $\textit{Rel} = \bigcup_i \textit{Rel}_i$, where $\textit{Rel}_i$ contains the predicates of arity $i$. We assume an untyped language and use the domain size to refer to the $|Const|$. For $a_1,...,a_k \in \textit{Const}\cup \textit{Var}$ and $R \in \textit{Rel}_k$, we call $R(a_1,...,a_k)$ an atom.  If $a_1,..,a_k\in \textit{Const}$, this atom is called ground. A literal is an atom or its negation. 
%A clause is a disjunction of literals. 
We use $vars(\alpha)$ to denote the variables that appear in a formula $\alpha$.
The formula $\alpha_0$ is called a grounding of $\alpha$ if $\alpha_0$ can be obtained by replacing each variable in $\alpha$ with a constant from $\textit{Const}$. 
A formula is called closed if all variables are bound by a quantifier. 
A possible world $\omega$ is defined as a set of ground atoms. The satisfaction relation $\models$ is defined in the usual way.

A substitution is a mapping from variables to terms. An injective substitution is a substitution which does not map any two variables to the same term. As is commonly done in statistical relational learning, we use the unique names assumption, meaning that $c_1 \neq c_2$ whenever $c_1$ and $c_2$ are different constants. A first-order universally quantified formula $\alpha$ is said to be {\em proper} if $\alpha\vartheta$ is trivially \blue{true} whenever $\vartheta$ is not injective. \blue{For instance, the formula $\forall X,Y : \fr(X,Y)$ is not proper whereas the formula $\forall X,Y : \fr(X,Y) \vee X = Y$ is proper. 
We sometimes omit the universal quantifiers and simply write, e.g.\ $\fr(X,Y) \vee X = Y$}.% for the latter formula.

A (global) example is a pair $(\mathcal{A},\mathcal{C})$, with $\mathcal{C}$ a set of constants and $\mathcal{A}$ a set of ground atoms which only use constants from $\mathcal{C}$. Let $\Upsilon = (\mathcal{A},\mathcal{C})$ be an example and $\mathcal{S}\subseteq \mathcal{C}$. The fragment $\Upsilon\langle S \rangle = (\mathcal{B},\mathcal{S})$ is defined as the restriction of $\Upsilon$ to the constants in $\mathcal{S}$, i.e.\ $\mathcal{B}$ is the set of all atoms from $\mathcal{A}$ which only contain constants from $\mathcal{S}$. 
Two examples $\Upsilon_1 = (\mathcal{A}_1,\mathcal{C}_1)$ and $\Upsilon_2 = (\mathcal{A}_2,\mathcal{C}_2)$ are isomorphic, denoted as $\Upsilon_1 {\approx} \Upsilon_2$, if there exists a bijection $\sigma : \mathcal{C}_1 \rightarrow \mathcal{C}_2$ such that $\sigma(\mathcal{A}_1) = \mathcal{A}_2$, where $\sigma$ is extended to ground atoms in the usual way. When $\cC$ is a set of constants and $\Phi_0$ a set of closed formulas, $\Pi(\cC,\Phi_0)$ denotes the set of all $\Upsilon = (\cA,\cC)$ such that $\Upsilon \models \Phi_0$ (we can think of $\Phi_0$ as a set of constraints).

\blue{A Markov logic network (MLN, Richardson and Domingos \citeyear{Richardson2006}) is a set of weighted formulas $(\alpha,w)$, $w\in \mathbb{R}$ and $\alpha$ a function-free and quantifier-free first-order formula. The semantics are defined w.r.t.\ the groundings of the first-order formulas, relative to some finite set of constants $\mathcal{C}$, called the domain. An MLN is seen as a template that defines a Markov random field. Specifically, an MLN $\mathcal{M}$ induces the following probability distribution on the set of global examples\footnote{\blue{What we call a global example in this paper is usually called a {\em possible world} in the MLN literature.}} $\Upsilon$: 
\begin{eqnarray}
p_{\mathcal{M}}(\Upsilon) = \frac{1}{Z} \exp \left(\sum_{(\alpha,w) \in \mathcal{M}} w \cdot n(\alpha,\Upsilon)\right)
\label{e:mln}
\end{eqnarray}
\noindent
where $n(\alpha, \Upsilon)$ is the number of groundings of $\alpha$ that are satisfied in $\Upsilon$, and $Z$ is a normalization constant to ensure that $p_{\mathcal{M}}$ is a  probability distribution.}

\blue{Weights of MLNs are typically estimated using maximum (pseudo)likelihood from a given global example. When the MLN with weights learned in this way is used as a probabilistic model with a domain of different size, there are no guarantees regarding the induced distribution. This is most obvious when the MLN contains formulas having different numbers of variables. Then, keeping the weights fixed, the formulas with the highest number of variables often completely dominate the others if we increase the domain size. While some simple cases could be solved by normalizing the counts $n(\alpha,\Upsilon)$, in general this is not the case. Shalizi and Rinaldo (\citeyear{shalizi2013consistency}) list the example of modelling homophily in social networks; we refer to their paper for details.}

\section{Two Types of Relational Marginals}\label{sec:relmarginals}

%\todo{Partially copied from the IJCAI paper}

%The typical parameter estimation task in SRL involves 
%Typically in SRL, model parameters are estimated from training data that consists of a large set of ground atoms $\mathcal{A}$. 
Typically, parameters for a statistical relational model are estimated from a single example of a relational structure  that consists of a large set of ground atoms $\mathcal{A}$. Intuitively, the goal is to learn a probability distribution of such relational structure\blue{s}. The challenge is how to estimate the distribution from a single example. One solution is based on the assumption that the relational structure has repeated regularities. Then, statistics about these regularities can be computed for small substructures of the train example and used to construct a distribution over large relational structures. Thus, the next issue is how to construct the fragments and compute statistics on them. Next, we discuss two possible ways to do so, which we will refer to as Model A and Model B.

%Here, I presume you mean a repeated pattern or regularity, e.g., a triangle in a friendship graph.  Then goal is estimate statistics about the patterns in each fragment. Is it then correct to say that we are answering 

%Effectively, the goal is to learn a probability distribution 
%Effectively, the challenge is how to estimate a distribution from a single example. 
%Essentially, the data is a single example of a relational structure, and the goal is learn
%In the context of SRL, we are typically given a large set of ground atoms $\mathcal{A}$ as training data. 
%This set essentially corresponds to a single example of a relational structure. Intuitively, we want to learn a probability distribution over such relational structures, but the question is how to estimate such a distribution from one example. Assuming some form of regularity in the example, one possible solution is to compute certain statistics from small fragments of the example and then construct a distribution over large relational structures having these statistics (or at least similar statistics).

\subsection{Model A}
The first approach to constructing fragments is from~\cite{kuzelka.ijcai.17}. It repeatedly samples subsets $\mathcal{S} \subseteq \cC$ of the constants from the given example $\Upsilon = (\cA,\cC)$ and then builds one training example $\Upsilon \langle \mathcal{S} \rangle$ for each $\mathcal{S}$. However, the training examples must consider isomorphic classes of constants to account for the fact that each fragment will contain different constants. This is accomplished by using the notion of {\em local examples}. 

%isomp

%However, the constants appearing in each of these fragments will be different, hence we need to consider their isomorphism classes. To this end we define the notion of {\em local example}.

%as a training example. 

%based on repeatedly sampling subsets F \subseteq \cC$ of the constants from the given example $\Upsilon = (\cA,\cC)$. Then, for each $\mathcal{S}$, the fragment $\Upsilon \langle S \rangle$ is a training example. However, the constants appearing in each of these fragments will be different, hence we need to consider their isomorphism classes. To this end we define the notion of {\em local example}.

\begin{definition}[Local example]
Let $k\in \mathbb{N}$. A local example of width $k$ is a pair $\omega=(\mathcal{A},\{1,...,k\})$, where $\mathcal{A}$ is a set of ground atoms that contain only constants from the set $\{1,2,\dots,k\}$. For an example $\Upsilon = (\mathcal{A},\mathcal{C})$ and $\mathcal{S}\subseteq \mathcal{C}$, we write $\Upsilon[\mathcal{S}]$ for the set of all local examples of width $|\mathcal{S}|$ which are isomorphic to $\Upsilon\langle \mathcal{S} \rangle$. 
\end{definition}

\noindent To distinguish local examples from global examples, we will denote them using lower case Greek letters such as $\omega$ instead of upper case letters such as $\Upsilon$.

\begin{example}\label{ex1}
Let $\Upsilon = (\{ \fr(\alice,\bob),$ $\fr(\bob,\alice),$ $\fr(\bob,\allowbreak\eve),$ $\fr(\eve,\bob),$ $\sm(\alice) \},$ $\{\alice, \bob, \eve \})$, i.e.\ the only smoker is $\textit{alice}$ and the friendship structure is:
\begin{center}
\resizebox{0.215\textwidth}{!}{
\tikzset{
main node/.style={ellipse,fill=white!11,draw,minimum size=0.3cm,inner sep=0pt},
other node/.style={rectangle,fill=white!11,minimum size=0.3cm,inner sep=0pt},
}
\tikzset{edge/.style = {->,> = latex'}}
\begin{tikzpicture}

\node[main node] (1) {alice};
\node[main node] (2) [right = 0.5cm of 1] {bob};
\node[main node] (3) [right = 0.5cm of 2] {eve};
%\node[other node] (4) [left = 0.2cm of 1] {sm(alice)};

\draw[edge] (1) [bend right] to (2);
\draw[edge] (2) [bend right] to (1);
\draw[edge] (3) [bend right] to (2);
\draw[edge] (2) [bend right] to (3);
\end{tikzpicture}}
\end{center}
Then $\Upsilon\langle \{ \alice, \bob \} \rangle$ ${=}$ $(\{\sm(\alice),$ $\fr(\alice,\bob),$ $\fr(\bob,\alice) \},$ $\{\alice,\bob\})$, $\Upsilon[\{ \alice, \bob \}] {=}$ $\{ (\{ \fr(1,2),$ $\fr(2,1),\sm(1) \}, \{1,2\}), {\{} (\{ \fr(2,1), \fr(1,2), \sm(2) \},$ $\{1,2\}) \}.$

\end{example}

\noindent This leads to a natural definition of a probability distribution over local examples of width $k$.
\begin{definition}[Relational marginal distribution of a global example]
Let $\Upsilon = (\mathcal{A},\mathcal{C})$ be an example and $k\in \mathbb{N}$. The relational marginal distribution of $\Upsilon$ of width $k$ is a distribution $P_{\Upsilon,k}$ over local examples, where $P_{\Upsilon,k}(\omega)$ is defined as the probability that $\omega$ is sampled by the following process:
(i) Uniformly sample a subset $\mathcal{S}$ of $k$ constants from $\mathcal{C}$.
(ii) Uniformly sample a local example $\omega$ from the set $\Upsilon[\mathcal{S}]$.
For a closed formula $\alpha$ without constants, we also define, its probability:
$P_{\Upsilon,k}(\alpha) = \sum_{\omega : \omega \models \alpha} P_{\Upsilon,k}(\omega).$
\end{definition}

\noindent We will call a pair $(\alpha,p)$, where $\alpha$ is a constant-free closed formula and $p \in [0;1]$, a {\em relational marginal constraint}.
%\begin{observation}\label{observation1}
%The following holds:
%$P_{\Upsilon,k}(\alpha) = \left( \begin{array}{c} |\cC| \\ k \end{array} \right)^{-1} \cdot |\{ \mathcal{S} : \mathcal{S} \subseteq \cC, |\mathcal{S}| = k, \Upsilon \langle \mathcal{S} \rangle \models \alpha \}|.$
%\end{observation}
We may also interpret the probability $P_{\Upsilon,k}(\alpha)$ of a closed constant-free formula $\alpha$ as the probability that $\alpha$ is true in a restriction $\Upsilon\langle\mathcal{S}\rangle$ of $\Upsilon$ to a randomly sampled subset $\mathcal{S}$ of $k$ constants from $\Upsilon$. 
Thus, if  we are only interested in the probabilities of closed constant-free formulas, we do not have to refer to local examples. Local examples are important because relational marginal distributions defined using them are themselves probability distributions on possible worlds, which is both nice conceptually and convenient (as it means that we can model relational marginals of Model A using any standard propositional probabilistic model). \blue{Arguably, this property also makes Model A more interpretable and hence easier to explain to non-specialists.}

%This means that we do not have to refer to local examples as long as we are only interested in probabilities of closed constant-free formulas. The importance of local examples is that the relational marginal distributions defined using them are themselves probability distributions on possible worlds, which is both nice conceptually and convenient (as it means that we can model relational marginals of Model A using any standard propositional probabilistic model).

Global examples may also be assumed to be sampled from some distribution and we define the corresponding marginal distributions induced by such distributions accordingly. When $P(\Upsilon)$ is a distribution over finite global examples from a possibly countably infinite set $\Omega$, then the marginal distribution of width $k$ is a distribution $P_k$ over local examples where $P_k(\omega)$ is defined as
$P_k(\omega) = \sum_{\Upsilon \in \Omega} P(\Upsilon) \cdot P_{\Upsilon,k}(\omega).$
For a closed formula $\alpha$ without constants, we also analogically define:
$P_{k}(\alpha) = \sum_{\omega : \omega \models \alpha} P_{k}(\omega)$.
In other words, a relational marginal distribution is a mixture of (possibly countably many) relational marginal distributions of global examples.

%In the next proposition, we describe a straightforward unbiased estimators of relational marginals.

\begin{proposition}\label{prop:unbiased_estimate}
Let $P(\Upsilon)$ be a distribution on domain size $n$ and $k \leq n$ be an integer. Let 
$\Omega_{\alpha} = \{ \Upsilon \langle \mathcal{S} \rangle : \mathcal{S} \subseteq \cC, |\mathcal{S}| = k, \Upsilon \langle \mathcal{S} \rangle \models  \alpha \}$ 
where $\Upsilon = (\cA, \cC)$ is sampled according to the distribution $P(\Upsilon)$. Then, for a closed \blue{constant-free} formula $\alpha$,
$\widehat{p}_{\alpha} = |\Omega_\alpha| \cdot \left( \begin{array}{c} n \\ k \end{array} \right)^{-1} $
is an unbiased estimate of $P_k(\alpha)$ for Model~A.
\end{proposition}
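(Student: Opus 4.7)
The plan is to unwind both sides of the claimed equality directly from the definitions, using the fact that a closed, constant-free formula has an isomorphism-invariant truth value, and then take expectation over $\Upsilon \sim P$.

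First, I would observe that for any $\mathcal{S}\subseteq \mathcal{C}$ with $|\mathcal{S}|=k$, every local example $\omega \in \Upsilon[\mathcal{S}]$ is by definition isomorphic to $\Upsilon\langle \mathcal{S}\rangle$. Since $\alpha$ is a closed constant-free formula, $\omega \models \alpha$ if and only if $\Upsilon\langle \mathcal{S}\rangle \models \alpha$; in particular either all elements of $\Upsilon[\mathcal{S}]$ satisfy $\alpha$ or none do. This means that when we expand
\[
P_{\Upsilon,k}(\alpha) \;=\; \sum_{\omega : \omega \models \alpha} P_{\Upsilon,k}(\omega)
\]
along the two-stage sampling process defining $P_{\Upsilon,k}$ (uniform $\mathcal{S}$, then uniform $\omega \in \Upsilon[\mathcal{S}]$), the inner sum over $\omega \in \Upsilon[\mathcal{S}]$ collapses to the indicator that $\Upsilon\langle \mathcal{S}\rangle \models \alpha$. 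Consequently
\[
P_{\Upsilon,k}(\alpha) \;=\; \binom{n}{k}^{-1}\, \bigl|\{\mathcal{S}\subseteq \mathcal{C} : |\mathcal{S}|=k,\ \Upsilon\langle \mathcal{S}\rangle \models \alpha\}\bigr|.
\]

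Second, I would note that distinct subsets $\mathcal{S}_1 \neq \mathcal{S}_2$ produce fragments $\Upsilon\langle \mathcal{S}_1\rangle$ and $\Upsilon\langle \mathcal{S}_2\rangle$ with different constant sets, so the map $\mathcal{S} \mapsto \Upsilon\langle \mathcal{S}\rangle$ is injective. Therefore the cardinality on the right-hand side above coincides with $|\Omega_\alpha|$ (for the random $\Upsilon$ from the proposition's definition), giving $\widehat{p}_\alpha = P_{\Upsilon,k}(\alpha)$ for each fixed realization of $\Upsilon$.

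Third, I would take expectation over $\Upsilon \sim P$ and apply the definition $P_k(\alpha) = \sum_\Upsilon P(\Upsilon) P_{\Upsilon,k}(\alpha) = \mathbb{E}_\Upsilon[P_{\Upsilon,k}(\alpha)]$ stated earlier (extended to $\alpha$ by the analogous definition in the preliminaries). This yields $\mathbb{E}[\widehat{p}_\alpha] = P_k(\alpha)$, which is exactly the unbiasedness claim.

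There is no real obstacle here; the only subtlety is to be careful that each subset $\mathcal{S}$ contributes mass $1/\binom{n}{k}$ to $P_{\Upsilon,k}(\alpha)$ regardless of $|\Upsilon[\mathcal{S}]|$, because the uniform inner distribution over $\Upsilon[\mathcal{S}]$ reweights the isomorphism class but all its members agree on $\alpha$. The injectivity remark ensures there is no double counting when we pass from ``number of good $\mathcal{S}$'' to $|\Omega_\alpha|$.
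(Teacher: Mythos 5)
Your proof is correct and follows essentially the same route as the paper's: both reduce $\mathbb{E}[\widehat{p}_\alpha]$ to $\sum_\Upsilon P(\Upsilon)\,P_{\Upsilon,k}(\alpha)$ via the identity $P_{\Upsilon,k}(\alpha) = \binom{n}{k}^{-1}\,|\{\mathcal{S} : \Upsilon\langle\mathcal{S}\rangle \models \alpha\}|$ and then invoke the definition of $P_k(\alpha)$. You simply spell out more explicitly than the paper why the inner uniform sampling over $\Upsilon[\mathcal{S}]$ collapses to an indicator (isomorphism-invariance of closed constant-free formulas) and why the count of good subsets equals $|\Omega_\alpha|$, which is a welcome clarification rather than a departure.
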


\subsection{Model B}
%Another possibility for defining ``relational marginals'' is not to consider subsets of constants from the given global example $\Upsilon$ 
The second approach is to consider random substitutions, which is in the spirit of existing works \cite{bacchus_halpern_koller,schulte}. Here, the statistics that we collect about $\Upsilon$ are defined as follows.

\begin{definition}[Probability of formulas under Model B]\label{def:modelB}
Let $\Upsilon = (\cA,\cC)$ be a global example and $\alpha$ be a universally quantified formula. Let $P_\vartheta$ be a uniform distribution on injective substitutions from the set $\Theta_\alpha = \{ \vartheta | \vartheta : \textit{vars}(\alpha) \rightarrow \cC \mbox{ and } \vartheta \mbox{ is injective} \}$. Then the probability $Q(\alpha)$ of the formula $\alpha$ under model B is defined as
{\small $$Q_{\Upsilon}(\alpha) = \sum_{\vartheta \in \Theta_\alpha} \mathds{1}(\Upsilon \models \alpha\vartheta) P_{\vartheta}(\vartheta) = \frac{1}{|\Theta|}\sum_{\vartheta \in \Theta_\alpha} \mathds{1}(\Upsilon \models \alpha\vartheta).$$}
\end{definition}

\noindent Just like for Model A, we extend the definition of the probability of formulas straightforwardly to the case where $\Upsilon$ is not fixed but sampled from some distribution over a countable set $\Omega$: $Q(\alpha) = \sum_{\Upsilon \in \Omega} Q_\Upsilon(\alpha) \cdot P(\Upsilon)$.

\begin{example}
Let $\Upsilon$ be as in Example \ref{ex1}. Let 
\begin{align*}
\alpha = \forall X,Y : \neg \fr(X,Y) \vee \sm(Y),\\
\beta = \forall X,Y : \neg \fr(X,Y) \vee \sm(X) \vee \sm(Y).
\end{align*}
Assume that the relation $\fr(.,.)$, ``friends'', is symmetric. Then, the formula $\alpha$ is classically true if all people who are friends with someone, \blue{smoke}. The formula $\beta$ is classically true if for every pair of people $A$ and $B$ who are friends, at least one of them smokes. Computing the respective probabilities, we get $P_{\Upsilon,2}(\alpha) = \frac{1}{3}$, $P_{\Upsilon,2}(\beta) = \frac{2}{3}$, $Q_{\Upsilon}(\alpha) = \frac{1}{2}$, $Q_{\Upsilon}(\beta) = \frac{2}{3}$, which illustrates that in general the ``marginal'' probabilities given by the two models will differ. The first model might be slightly easier to interpret as the marginal probabilities $P(\gamma)$ correspond to the fraction of the width-$k$ fragments of $\Upsilon$ in which $\gamma$ is true as a classical logic formula.
\end{example}

We note that the straightforward analogue of Proposition~\ref{prop:unbiased_estimate} also holds for Model~B.

\section{Max-Entropy Models}

In this section we show how to compute {\em models} of given relational marginals under Model A and Model B.

\begin{definition}[Model of relational marginals]
Let us have a set of pairs $\Phi = \{ (\alpha_1,\theta_1),$ $\dots,$ $(\alpha_h,\theta_h)\}$ of relational marginals, where $\alpha_i$ is a closed formula and $\theta_i \in [0;1]$. Let $\Phi_0$ be a set of \blue{formulas, called} hard rules \blue{(i.e.\ formulas that are supposed to always hold)}. We say that a probability distribution $P(\Upsilon)$ over worlds satisfying the hard rules from $\Phi_0$ is a width-$k$ model of $\Phi$ iff $P_k(\alpha_i) = \theta_i$ ($Q_k(\alpha_i) = \theta_i$, respectively) for all $(\alpha_i,\theta_i) \in \Phi$.
\end{definition}

We will use standard duality arguments from convex optimization \cite{boyd2004convex}, essentially following \cite{optimization_entropy_counting}. \red{For both models, the result will in both cases an exponential-family model and the one for Model B will turn out to be equivalent to MLNs.}

\subsection{Model A}

Let $\cC$ be a set of constants, $\cA$ be the set of all atoms over $\cC$ based on some given first-order language and let $\mathcal{C}_k$ denote the set of all $k$-element subsets of $\cC$. Next, let $\Phi$ be a set of relational marginals and $\Phi_0$ be a set of hard constraints, i.e.\ formulas $\alpha$ such that if $\Upsilon \not\models \alpha$ then $P(\Upsilon) = 0$. We assume that there exists at least one distribution $P'$ which is a model of $\Phi$ and which satisfies the following {\em positivity} condition:  $P'(\Upsilon) > 0$ for all $\Upsilon$ satisfying the hard constraints (i.e.\ those for which $\Upsilon \models \Phi_0$).

\begin{equation}
    \sup_{\{ P_\Upsilon : \Upsilon \in \Pi(\cC,\Phi_0) \}}  \sum_{\Upsilon \in \Pi_{n}(\cC,\Phi_0)} P_{\Upsilon} \log{\frac{1}{P_\Upsilon}} \quad \textit{ s.t.}\label{eq:maximum_entropy_criterion}
\end{equation}
\begin{multline}\label{eq:maxent_marginal_constraints}
    \forall (\alpha_i,\theta_i) \in \Phi : \\
    \frac{1}{|\cC_k|} \sum_{\mathcal{S} \in \cC_k } \sum_{\Upsilon \in \Pi(\cC,\Phi_0)} \mathds{1}(\Upsilon\langle \mathcal{S} \rangle \models \alpha_i) \cdot P_\Upsilon = \theta_i
\end{multline}
\begin{align}
\forall \Upsilon \in \Pi_{n}(\cC,\Phi_0) : P_{\Upsilon} \geq 0, &&\sum_{\Upsilon \in \Pi(\cC,\Phi_0)} P_{\Upsilon} = 1\label{eq:maximum_entropy_cnormalization}
\end{align}

\noindent \red{Next we define $\#_k(\alpha,\Upsilon) = |\{ \mathcal{S} \in \mathcal{C}_k : \Upsilon\langle \mathcal{S} \rangle \models \alpha \}|$ as the number of sets $\mathcal{S} \in \cC_k$ such that the formula $\alpha$ is classically true in the restriction of $\Upsilon$ to constants in $\mathcal{S}$.} In the appendix, we show that the distribution, if it exists, that is a solution to the optimization problem has the following form:
%. That is $\#_k(\alpha,\Upsilon)$ is the number of sets $\mathcal{S} \in \cC_k$ such that the formula $\alpha$ is classically true in the restriction of $\Upsilon$ to constants in $\mathcal{S}$. We show in the appendix that the distribution, if it exists, that is a solution to the optimization problem has the following form:
\begin{equation}
    P_{\Upsilon} = \frac{1}{Z} \exp{\left(\sum_{\alpha_i \in \Phi} \lambda_i \frac{\#_k(\alpha_i,\Upsilon)}{|\cC_k|}\right)}.\label{eq:distr1}
\end{equation}

The Lagrangian dual problem of the maximum entropy problem is to maximize (where $\lambda_i \in \mathbb{R}$):
\begin{multline}
\sum_{\alpha_i \in \Phi} \lambda_i \theta_i - \log{\sum_{\Upsilon \in \Pi(\cC,\Phi_0)} e^{\sum_{\alpha_i \in \Phi} \lambda_i \frac{\#_k(\alpha_i,\Upsilon)}{|\cC_k|}}}\label{eq:dual}.
\end{multline}
Due to the positivity assumption, Slater's condition \cite{boyd2004convex} is satisfied and strong duality holds. Consequently, instead of solving the original problem, which has an intractable number of constraints and variables (one variable for each world $\Upsilon \in \Pi(\cC, \Phi_0)$), we can solve the dual problem, which only has $|\Phi|$ variables. On the other hand, the optimization criterion of the dual problem may still be computationally hard to solve as it requires weighted counting over worlds in $\Pi(\cC,\Phi_0)$. %, so in general, it may be still computationally hard to solve this problem.
However, in many restricted, but non-trivial, cases, we can exploit lifted weighted model counting techniques in the same way as they were used for maximum-likelihood estimation in Van Haaren et al.\ (\citeyear{vanhaaren.mlj}).

Let us perform a change of variables $w_i := \lambda_i/|\cC_{k_i}|$. This gives us 
\begin{equation}
    P(\Upsilon) = \frac{1}{Z}\exp{\left(\sum_{\alpha_i \in \Phi} w_i \cdot \#_k(\alpha_i,\Upsilon)\right)}
\end{equation}
for the distribution and
\begin{multline}
\sum_{\alpha_i \in \Phi} w_i \theta_i |\cC_{k_i}| - \log{\sum_{\Upsilon \in \Pi(\cC,\Phi_0)} e^{\sum_{\alpha_i \in \Phi} w_i \cdot \#_k(\alpha_i,\Upsilon)}}\label{eq:dual2}
\end{multline}
for the optimization criterion of the dual problem. Assuming that the marginals were estimated from a global example $\widehat{\Upsilon} \in \Pi(\cC,\Phi_0)$ (note that here the domain $\cC$ is the same as the domain of the global examples over which the distribution is defined) and that they still satisfy the positivity assumption, we can also rewrite (\ref{eq:dual2}) as
{\small \begin{multline*}
\sum_{\alpha_i \in \Phi} w_i \cdot \#_k(\alpha_i,\widehat{\Upsilon}) - \log{\sum_{\Upsilon \in \Pi(\cC,\Phi_0)} e^{\sum_{\alpha_i \in \Phi} w_i \cdot \#_k(\alpha_i,\Upsilon)}}.
\end{multline*}}
It is straightforward to check that this is the same as directly optimizing the log-likelihood of $\widehat{\Upsilon}$. Thus, this is a relational analogue of the well-known duality of maximum likelihood and maximum entropy \cite{wainwright}.
Note that it is important for the duality of maximum likelihood and maximum entropy that both the $\widehat{\Upsilon}$, from which we estimated the parameters, and the global examples over which the distribution is computed have the same domain size. The Section {\em Realizability} will address cases where the domain sizes of the training and testing data differ.

\subsection{Model B}

Like for Model A, we can construct a convex optimization problem to obtain a maximum-entropy distribution with the given relational marginals under Model B. This problem's optimization criterion  is the same as (\ref{eq:maximum_entropy_criterion}). \red{To obtain the constraints enforcing the marginals, we can replace the summation over subsets of constants in $\cC$ in Equation~(\ref{eq:maxent_marginal_constraints}) by a summation over substitutions from $\Theta_{\alpha_i}$, where $\Theta_{\alpha_i}$ is defined as in Definition \ref{def:modelB}. This yields the following set of constraints for all $(\alpha_i,\theta_i) \in \Phi$:}
\begin{equation}\label{eq:maxent_marginal_constraintsB}
    \sum_{\vartheta \in \Theta_{\alpha_i}} \frac{1}{|\Theta_{\alpha_i}|} \sum_{\Upsilon \in \Pi(\cC,\Phi_0)} \mathds{1}(\Upsilon \models \alpha_i\vartheta) \cdot P_\Upsilon = \theta_i.
\end{equation}

Using basically the same reasoning as for Model A, we arrive at the following form of the probability distribution
\begin{equation}
    P(\Upsilon) = \frac{1}{Z}\exp{\left(\sum_{\alpha_i \in \Phi} w_i \cdot n(\alpha_i,\Upsilon)\right)}
\end{equation}
where $n(\alpha_i,\Upsilon)$ is the number of groundings $\alpha_i\vartheta$ of the formula $\alpha_i$, \blue{where} all $\vartheta$'s are injective, which are true in $\Upsilon$. 
\red{This distribution is identical to the one for MLNs which only contain proper formulas %\footnote{It has been shown \cite{poole.buchman} that the setting with proper formulas is equivalent in expressivity to general MLNs.} 
(because of the injectivity requirement in the definition of Model B).} The only difference with the distribution of Model A is the use of $n(\alpha_i,\Upsilon)$ instead of $\#_k(\alpha_i,\Upsilon)$. The dual optimization criterion for Model B then becomes to maximize
\begin{multline}
\sum_{\alpha_i \in \Phi} w_i \theta_i |\Theta_{\alpha_i}| - \log{\sum_{\Upsilon \in \Pi(\cC,\Phi_0)} e^{\sum_{\alpha_i \in \Phi} w_i \cdot n(\alpha_i,\Upsilon)}}\label{eq:dual2_mln}
\end{multline}
which can be rewritten, when $\theta_i$'s are estimated from some $\widehat{\Upsilon} = (\widehat{\cA},\widehat{\cC})$ with $|\widehat{\cC}| = |\widehat{\cC}|$, as $\sum_{\alpha_i \in \Phi} w_i \cdot n(\alpha_i,\widehat{\Upsilon}) - \log{\sum_{\Upsilon \in \Pi(\cC,\Phi_0)} e^{\sum_{\alpha_i \in \Phi} w_i \cdot n(\alpha_i,\Upsilon)}}$. This is the same as log-likelihood of $\widehat{\Upsilon}$ w.r.t.\ the MLN given by (\ref{eq:dual2_mln}), assuming that all the formulas in the MLN are proper. Thus, when the size of the domain of the training example $\widehat{\Upsilon}$ is the same as the cardinality of the domain of the modeled distribution, the max-entropy relational marginal problem in Model B is the same as maximum-likelihood estimation in MLNs.

% \begin{observation}
% The results stated in this section would still hold even if we did not require the sampled substitutions to be injective in the definition of Model B. However, without the injectivity, we would lose some nice properties described in the next sections.
%\end{observation}

\section{Realizability}

Not all relational marginals have a model for a given domain size (or even any model at all). This is a problem if we want to estimate relational marginals on some given global example $\widehat{\Upsilon}$ and then use them to obtain a distribution on global examples that have a different domain size \red{(which we address in Section {\em Estimation})}. The duality between maximum-likelihood estimation and max-entropy relational marginal problems discussed in the previous section only holds when the training data and the distribution that we want to model have the same domain size. %are the same.

In this section, we will show how to obtain relational marginals for any domain size. Accomplishing this requires replacing the consistency of relational marginal estimators with a weaker notion. However, we can still bound the difference between the unbiased and the adjusted estimates. More importantly, the difference will tend to zero as the domain size of the examples from which the respective marginals were estimated increases. 

First, it is easy to see that if a model exists for a given set of marginals and domain size, then there is also such a model for smaller domain sizes, as the next proposition asserts.\footnote{Proposition \ref{prop2} is one of the results which would not hold for Model B if we did not require injectivity of the randomly sampled substitutions in the definition of Model B.}

\begin{proposition}\label{prop2}
For Model A, if there is a width-$k$ model of $\Phi$ on domain of size $n$ then there is also a width-$k$ model of $\Phi$ on domain of size $m$ for any $m$ satisfying $n \geq m \geq k$. For Model B, if there is a model of $\Phi$ on domain of size $n$ then there is also a model of $\Phi$ on domain of size $m$ for any $m$ satisfying $n \geq m \geq \max_{\alpha \in \Phi} |\textit{vars}(\alpha)|$.
\end{proposition}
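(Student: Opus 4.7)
The plan is to prove both parts by the same construction: from a distribution $P$ that is a model on domain size $n$, build a distribution $P'$ on domain size $m$ by first drawing $\Upsilon = (\mathcal{A},\mathcal{C}) \sim P$ and then taking a uniformly random restriction to $m$ constants (with a canonical relabeling onto a fixed target domain $\mathcal{C}'$ of size $m$). Concretely, one can think of $P'$ as the law of $\sigma^{-1}(\Upsilon\langle \sigma(\mathcal{C}')\rangle)$ where $\sigma:\mathcal{C}' \to \mathcal{C}$ is drawn uniformly among injections. All constraints in $\Phi$ are constant-free closed formulas, and (by the universal-formula convention of the paper) hard constraints $\Phi_0$ are preserved under restriction, so $P'$ is supported on $\Pi(\mathcal{C}',\Phi_0)$.

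For Model A, I would verify that $P'_k(\alpha) = P_k(\alpha)$ for every constant-free $\alpha$, which by linearity reduces to showing, for each fixed $\Upsilon$, that $\mathbb{E}_{\mathcal{S}'}[P_{\Upsilon\langle\mathcal{S}'\rangle,k}(\alpha)] = P_{\Upsilon,k}(\alpha)$, where $\mathcal{S}'$ is a uniform $m$-subset of $\mathcal{C}$. Expanding, this is a statement about nested uniform sampling: picking a uniform $k$-subset of a uniform $m$-subset of $\mathcal{C}$ is distributionally the same as picking a uniform $k$-subset of $\mathcal{C}$. The underlying counting identity is $\binom{m}{k}^{-1} \binom{n-k}{m-k}\binom{n}{m}^{-1} = \binom{n}{k}^{-1}$, and this uses exactly the hypothesis $m \geq k$ so that $k$-subsets of an $m$-domain exist.

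For Model B, the analogous step is to show $\mathbb{E}_{\mathcal{S}'}[Q_{\Upsilon\langle\mathcal{S}'\rangle}(\alpha)] = Q_\Upsilon(\alpha)$. Writing $v = |\textit{vars}(\alpha)|$, each injective $\vartheta : \textit{vars}(\alpha) \to \mathcal{C}$ has its image contained in a uniform $m$-subset $\mathcal{S}'$ with probability $\binom{n-v}{m-v}/\binom{n}{m}$, and the normalizing constant $1/|\Theta_\alpha^{\mathcal{S}'}| = (m-v)!/m!$ combines with this to give exactly $1/|\Theta_\alpha^{\mathcal{C}}| = (n-v)!/n!$, so the expected Model B statistic over the restriction equals the statistic on $\Upsilon$. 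Taking expectation over $\Upsilon \sim P$ yields $Q_{P'}(\alpha) = Q_P(\alpha) = \theta_i$. The lower bound $m \geq \max_{\alpha\in\Phi} |\textit{vars}(\alpha)|$ is precisely what guarantees that every $\Theta_\alpha^{\mathcal{S}'}$ is non-empty, which is why Model B requires a stronger bound than Model A (and is also why injectivity of substitutions in the definition of Model B is essential here, as noted in the footnote).

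The main (and only) mild obstacle is the bookkeeping around working with distributions over global examples on a fixed domain $\mathcal{C}'$ versus statistics on $\Upsilon$ that are naturally defined up to isomorphism: once one checks that relabeling by the uniform bijection $\sigma$ does not change the value of any constant-free formula under the restriction, the two elementary counting identities above do all the work.
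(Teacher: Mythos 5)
Your proposal is correct and follows essentially the same route as the paper: construct $P'$ by sampling $\Upsilon \sim P$ and restricting to a uniformly random $m$-subset, then observe that a uniform $k$-subset of a uniform $m$-subset of $\mathcal{C}$ is distributed as a uniform $k$-subset of $\mathcal{C}$ (and analogously for injective substitutions in Model B). The paper states this nested-sampling equivalence informally where you verify it with the explicit binomial identity, but the argument is the same.
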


\noindent Next, we give an example of a relational marginal distributions that does not have a model for arbitrary domain cardinalities. %We give an example next.

\begin{example}
Let $\mathcal{L}$ consist of predicate symbols $e/2, r/1, g/1, b/1$ and $\Upsilon = (\cA,\cC)$ be a global example where
\begin{align*}
    \cA &= \{ e(v_1,v_2), e(v_2,v_1), e(v_2,v_3),e(v_3,v_2), e(v_1,v_3), \\
    &e(v_3,v_1), r(v_1), g(v_2), b(v_3) \},\quad\cC = \{ v_1, v_2, v_3 \}.
\end{align*}
Let $k = 2$ be the width of local examples. And let $F(X_1,X_2) \myeq X_1 \neq X_2 \wedge \neg e(X_1,X_1) \wedge e(X_1,X_2) \wedge e(X_2,X_1) \wedge \neg e(X_2,X_2)$. Then we can estimate, for instance, the following marginals from $\Upsilon$ under Model A:
\begin{align*}
P[ \exists X_1,X_2 : F(X_1,X_2) \wedge r(X_1) \wedge \neg g(X_1) \wedge \neg b(X_1) \wedge \\
      \wedge \neg r(X_2) \wedge g(X_2) \wedge \neg b(X_2)] = \frac{1}{3} \\
P[ \exists X_1,X_2 : F(X_1,X_2) \wedge r(X_1) \wedge \neg g(X_1) \wedge \neg b(X_1) \wedge \\
      \wedge \neg r(X_2) \wedge \neg g(X_2) \wedge b(X_2)] = \frac{1}{3} \\
P[ \exists X_1,X_2 : F(X_1,X_2) \wedge \neg r(X_1) \wedge g(X_1) \wedge \neg b(X_1) \wedge \\
      \wedge \neg r(X_2) \wedge \neg g(X_2) \wedge b(X_2)] = \frac{1}{3}
\end{align*}
(which can also be rewritten as probabilities of universally quantified formulas by negating the existentially quantified conjunctions). The global example $\Upsilon$ can be imagined as a complete directed graph (without self-loops) on $3$ vertices $v_1$, $v_2$, $v_3$ where each of the vertices is colored by one of the ``colors'' $r$, $g$, $b$. 

%We claim that there is no distribution on global examples with domain size greater than $3$ that would satisfy the above marginal probabilities. 
We claim that no distribution on global examples satisfies the above marginal probabilities for domain size greater than $3$.
This can be shown as follows. Using the intuitive view of $\Upsilon$ as a colored directed graph, the distribution on global examples of domain size e.g.\ $4$ would be a distribution on graphs with $4$ vertices. Such graphs would have to contain either two vertices not connected by an edge or two vertices connected by an edge but labeled with the same color or a vertex with no color. However, two such vertices would correspond to a local example which would otherwise have zero probability under the marginals estimated from $\Upsilon$, which are shown above. While the above reasoning is for Model A, a similar argument can be used to show the same issues  for Model B. 
\end{example}

One of the consequences of the above example is that the unbiased estimates of relational marginals from Proposition~\ref{prop:unbiased_estimate} cannot always be used for defining distributions of arbitrary domain sizes. \red{The section {\em Estimation} shows under which such unbiased estimates do exist, using the concept of {\em relational marginal polytopes}}.

In order to construct distributions for arbitrary domain sizes, which have relational marginals that are close to the relational marginals given by some global example $\Upsilon$, we will rely on the following construction which we call {\em expansion of global example}. 

\begin{definition}[Expansion of a global example]
Let $\Upsilon = (\cA,\cC)$ be a global example where $\cC = \{c_1,c_2,\dots,c_n\}$, and let $l$ be a positive integer. Then the {\em $l$-level expansion} of $\Upsilon$ is a global example $\Upsilon' = (\cA',\cC')$ given by: $\cA' = \{ a\theta : a \in \cA, \theta \in \Theta  \}$, $\cC' = \{ c_1, c_2, \dots, c_n, c_{n+1}, \dots, c_{l\cdot n}\}$. Here, constants $c_i$, $c_j$ are said to be congruent if $i \equiv j \mod n$.
Here, $c_{n+1}, \dots, c_{l \cdot n}$ are some arbitrary new constants and $\Theta$ is a set of all substitutions $\theta$ which satisfy the requirement that $c\theta$  is congruent with $c$ for each $c\in \mathcal{C}$.
%of the form $\{c_1 \mapsto c_{1+i_1 \cdot n}, c_2 \mapsto c_{2+i_2 \cdot n}, \dots, c_n \mapsto c_{n+i_n \cdot n} \}$ where $i_1, i_2, \dots, i_n \in \{0,1,\dots,l-1 \}$. Constants $c_i$, $c_j$ are said to be congruent if $i \equiv j \mod n$.
\end{definition}

\noindent Next we illustrate the notion of expansion of examples.

\begin{example}
Let $\Upsilon = (\cA,\cC)$ be given by
\begin{align*}
    \cA = \{ e(c_1,c_2), e(c_2,c_3)\}, \cC = \{ c_1,c_2,c_3 \}.
\end{align*}
\blue{Interpreting the predicate $e$ as edge, the corresponding graph looks like:}

\begin{center}
\resizebox{0.3\textwidth}{!}{
\tikzset{
main node/.style={circle,fill=white!11,draw,minimum size=0.1cm,inner sep=0pt},
other node/.style={rectangle,fill=white!11,minimum size=0.1cm,inner sep=0pt},
}
\tikzset{edge/.style = {->,> = latex'}}
\begin{tikzpicture}

\node[main node] (1) {$c_1$};
\node[main node] (2) [right = 2cm of 1] {$c_2$};
\node[main node] (3) [right = 2cm of 2] {$c_3$};

\draw[edge] (1) to (2);
\draw[edge] (2) to (3);

\end{tikzpicture}}
\end{center}

\noindent Then the $2$-level expansion of $\Upsilon$ is $\Upsilon' = (\cA',\cC')$ where
\begin{align*}
    \cA &= \{ e(c_1,c_2), e(c_2,c_3), e(c_4,c_5), e(c_5,c_6), e(c_1,c_5),\\
    &e(c_2,c_6), e(c_4,c_2), e(c_5,c_3) \},\cC = \{ c_1,c_2,c_3, c_4,c_5,c_6 \}.
\end{align*}
\blue{Interpreting the predicate $e$ again as edge, the expansion corresponds to the following graph:}

\begin{center}
\resizebox{0.3\textwidth}{!}{
\tikzset{
main node/.style={circle,fill=white!11,draw,minimum size=0.1cm,inner sep=0pt},
other node/.style={rectangle,fill=white!11,minimum size=0.1cm,inner sep=0pt},
}
\tikzset{edge/.style = {->,> = latex'}}
\begin{tikzpicture}

\node[main node] (1) {$c_1$};
\node[main node] (2) [right = 2cm of 1] {$c_2$};
\node[main node] (3) [right = 2cm of 2] {$c_3$};

\node[main node] (4) [below = 0.25cm of 1] {$c_4$};
\node[main node] (5) [below = 0.25cm of 2] {$c_5$};
\node[main node] (6) [below = 0.25cm of 3] {$c_6$};

\draw[edge] (1) to (2);
\draw[edge] (2) to (3);
\draw[edge] (1) to (5);
\draw[edge] (2) to (6);
\draw[edge] (4) to (2);
\draw[edge] (5) to (3);
\draw[edge] (4) to (5);
\draw[edge] (5) to (6);

\end{tikzpicture}}
\end{center}

\noindent The width-$2$ marginal probabilities on $\Upsilon$ and $\Upsilon'$ are:
{\small \begin{align*}
    P_{\Upsilon,2}(\omega_1) = \frac{1}{3} && P_{\Upsilon',2}(\omega_1) = \frac{7}{15}  && \omega_1 = (\{ \}, \{1,2 \}) \\
    P_{\Upsilon,2}(\omega_2) = \frac{1}{3} && P_{\Upsilon',2}(\omega_2) = \frac{4}{15}  && \omega_2 = (\{ e(1,2) \}, \{1,2 \}) \\
    P_{\Upsilon,2}(\omega_3) = \frac{1}{3} && P_{\Upsilon',2}(\omega_3) = \frac{4}{15}  && \omega_3 = (\{ e(2,1) \}, \{1,2 \}) \\
    P_{\Upsilon,2}(\omega) = 0 && P_{\Upsilon',2}(\omega) = 0 && \omega \not\in \{\omega_1,\omega_2,\omega_3 \}
\end{align*}}
The differences between the marginal probabilities given by $\Upsilon$ and $\Upsilon'$ are at most $\frac{2}{15}$ in this case, which is quite high. However, it follows from what we show in turn that this is mostly because of the small size of $\Upsilon$. For larger global examples, the difference between the marginals obtained from them and from their expansions will tend to be smaller.
\end{example}

Importantly, it is possible to bound the difference of the parameters obtained on expansions of global examples and the unbiased estimates obtained on the original examples.

\begin{proposition}\label{prop:difference}
Let $\Upsilon = (\cA,\cC)$ be a global example and $\Upsilon'$ its $l$-level expansion and let $n = |\mathcal{A}|$ and $k$ be the width of local examples. Then for any formula $\alpha$:
\begin{multline*}
% |P_{\Upsilon,k}(\alpha)-P_{\Upsilon',k}(\alpha)| \leq 1 - \frac{n-1}{n} \cdot \frac{n-2}{n}\cdot \dots \cdot \frac{n-k+1}{n} \leq \\
% \leq 1-\left(\frac{n-k+1}{n}\right)^{k-1} = O\left(\frac{1}{n}\right).
|P_{\Upsilon,k}(\alpha)-P_{\Upsilon',k}(\alpha)| \leq 1-\left(\frac{n-k+1}{n}\right)^{k-1}
\end{multline*}
for Model A, and
% {\small \begin{multline*}
% |Q_{\Upsilon}(\alpha)-Q_{\Upsilon'}(\alpha)| \leq 1-\left(\frac{n-|\textit{vars}(\alpha)|+1}{n}\right)^{|\textit{vars}(\alpha)|-1} 
% \end{multline*}}
\begin{multline*}
|Q_{\Upsilon}(\alpha)-Q_{\Upsilon'}(\alpha)| \leq 1-\left(\frac{n-k_\alpha+1}{n}\right)^{k_\alpha-1} 
\end{multline*}
for Model B, where $k_\alpha = |\textit{vars}(\alpha)|$.
\end{proposition}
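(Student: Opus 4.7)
The plan is to couple the random fragment (or injective substitution) in $\Upsilon'$ with one in $\Upsilon$ through the projection $\pi : c_j \mapsto c_{((j-1)\bmod n)+1}$ that sends each constant of $\mathcal{C}'$ to its congruence-class representative, and to control the discrepancy by the probability that two sampled elements of $\mathcal{C}'$ turn out to be congruent mod $n$.

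For Model A, I would first use the fact that, since $\alpha$ is closed and constant-free, every local example in $\Upsilon[\mathcal{S}]$ agrees on $\alpha$, so $P_{\Upsilon,k}(\alpha) = \Pr_{\mathcal{S}}(\Upsilon\langle\mathcal{S}\rangle \models \alpha)$ and analogously for $\Upsilon'$, where $\mathcal{S}$ and $\mathcal{S}'$ are uniform $k$-subsets of $\mathcal{C}$ and $\mathcal{C}'$ respectively. Let $E$ be the event that the $k$ constants sampled from $\mathcal{C}'$ lie in pairwise distinct congruence classes. By the definition of expansion, an atom $R(c_{i_1},\ldots,c_{i_m}) \in \mathcal{A}'$ iff $R(\pi(c_{i_1}),\ldots,\pi(c_{i_m})) \in \mathcal{A}$, so on the event $E$ the map $\pi$ is an isomorphism between $\Upsilon'\langle \mathcal{S}'\rangle$ and $\Upsilon\langle\pi(\mathcal{S}')\rangle$. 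Moreover, conditioned on $E$, the image $\pi(\mathcal{S}')$ is uniform over $k$-subsets of $\mathcal{C}$ since each such subset has exactly $l^k$ non-colliding preimages. Therefore $\Pr\bigl(\Upsilon'\langle\mathcal{S}'\rangle \models \alpha \mid E\bigr) = P_{\Upsilon,k}(\alpha)$.

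Next I would bound $\Pr(E)$ by a standard birthday-style estimate. Drawing constants sequentially from $\mathcal{C}'$ without replacement, after $i-1$ distinct classes have been used, the probability that the $i$-th sample lies in a new class is $\tfrac{l(n-i+1)}{ln-i+1} \geq \tfrac{n-i+1}{n}$, so $\Pr(E) \geq \prod_{i=2}^{k}\tfrac{n-i+1}{n} \geq \bigl(\tfrac{n-k+1}{n}\bigr)^{k-1}$. Combining with the trivial bound $|P_{\Upsilon,k}(\alpha) - \Pr(\Upsilon'\langle\mathcal{S}'\rangle \models \alpha \mid \neg E)| \leq 1$ via the law of total probability gives $|P_{\Upsilon',k}(\alpha) - P_{\Upsilon,k}(\alpha)| \leq \Pr(\neg E) \leq 1 - \bigl(\tfrac{n-k+1}{n}\bigr)^{k-1}$. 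For Model B I would run the same argument with injective substitutions: let $E$ be the event that a uniform injective $\vartheta : \textit{vars}(\alpha) \to \mathcal{C}'$ sends the $k_\alpha$ variables to constants in pairwise distinct classes. The same sequential-draw estimate gives $\Pr(E) \geq \bigl(\tfrac{n-k_\alpha+1}{n}\bigr)^{k_\alpha-1}$, and conditional on $E$ the composition $\pi \circ \vartheta$ is a uniform injective substitution into $\mathcal{C}$ with $\Upsilon' \models \alpha\vartheta$ iff $\Upsilon \models \alpha(\pi\circ\vartheta)$, hence $\Pr(\Upsilon'\models\alpha\vartheta \mid E) = Q_\Upsilon(\alpha)$ and the stated bound follows.

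The main obstacle is verifying the structural reduction on the event $E$: it requires simultaneously that (i) the expansion consistently places atoms across all congruence classes so that $\pi$ really acts as an isomorphism on any non-colliding fragment, and (ii) the push-forward of the uniform law on non-colliding $k$-subsets of $\mathcal{C}'$ under $\pi$ is uniform on $k$-subsets of $\mathcal{C}$. Once these two points are nailed down, the rest of the argument is the elementary birthday estimate and a total-probability split.
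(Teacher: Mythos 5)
Your proposal is correct and follows essentially the same route as the paper's proof: both condition on the event that the sampled constants (or the constants hit by the injective substitution) lie in pairwise distinct congruence classes, observe that conditionally on this event the induced fragment distribution coincides with that of $\Upsilon$, and bound the discrepancy by the collision probability via a total-probability/total-variation split. The only cosmetic difference is that the paper computes the collision probability $\gamma$ exactly and derives the stated bound by a monotonicity-in-$l$ argument followed by the limit $l \to \infty$, whereas you bound it directly with a sequential birthday-style estimate; both yield the same bound.
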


\noindent Note that the difference between the true and modeled probabilities of a fixed formula $\alpha$ decays as $O\left(\frac{1}{n}\right)$.

%An important consequence of the above proposition is that, in the limit of size of the original example $\Upsilon$ going to infinity, the expected value of the estimates of relational marginals are the same as their true expected values. \todo{move this paragraph to section Estimation}

The techniques described in this section have the following limitation. If we have a set of hard rules $\Phi_0$ which are satisfied by a given $\Upsilon$, these rules may not be satisfied in an expansion of $\Upsilon$. This is not just a limitation of our method though. There are cases where it is not possible to extend a given $\Upsilon$ while satisfying the constraints (this is because we allow the use of equality in the formulas and because we use the unique names assumption). However, if the example $\Upsilon$ is large enough and it satisfies the hard rules, then the number of violations of these rules will be small, which follows again from Proposition \ref{prop:difference}. 

In fact, it seems to be a desirable property that formulas $\alpha$ satisfying $P_{\Upsilon,k}(\alpha) = 1$ do not have to be treated as completely hard rules but as rules that ``mostly'' hold if they are learned from $\Upsilon$, since it may be that they are not really rules that should always hold. Yet, if we actually took them as hard rules we would be forced to assign probability $0$ to any example that violates them. It is possible to use the idea of expansions to obtain a distribution in which any formula $\alpha$ has nonzero probability and all properties of expansions are still preserved (such as those from Proposition \ref{prop:difference}). This can be achieved by randomly sampling additional atoms containing only congruent constants and adding them to the respective expansion. If we use a sufficiently high level of the expansion (at least $k$ for Model A and at least $\max_{\alpha \in \Phi}|\textit{vars}(\alpha)|$ for Model B) then the probability of any formula will be nonzero and not equal to one w.r.t.\ the distribution induced by the expansions with the sampled atoms.

% In particular, we have the next corollary.

% \begin{corollary}
% \todo{}
% \end{corollary}

\section{Relational Marginal Polytopes}

In this section, we define another important concept called {\em relational marginal polytope}, which will be used in the next section where we deal with estimation errors. 

\begin{definition}[Relational marginal polytope for Model A]
Let $k,m \in \mathds{N}$ and $\Phi = \{ \alpha_1,\dots,\alpha_l \}$ be a set of formulas and $\Phi_0$ be a set of hard rules. Let $\cC = \{1,\dots,m\}$ and $\cC_k$ be the set of size-$k$ subsets of $\cC$. Then, for Model A, the relational marginal polytope of $\Phi$ of width $k$ and cardinality $m$ w.r.t.\ the hard rules from $\Phi_0$ is the convex hull of the set
$$\left\{ \left. \left( \frac{\#_k(\alpha_1,\Upsilon)}{|\cC_k|}, \dots, \frac{\#_k(\alpha_l,\Upsilon)}{|\cC_k|}  \right) \right| \Upsilon \in \Pi(\cC,\Phi_0) \right\}.$$
Let $\Theta_{\alpha_i}$ be the set of all injective substitutions from variables of $\alpha_i$ to constants from $\cC$. Then, for Model B, the relational marginal polytope of $\Phi$ of cardinality $m$ w.r.t.\ the hard rules from $\Phi_0$ is the convex hull of the set
$$\left\{ \left. \left( \frac{n(\alpha_1,\Upsilon)}{|\Theta_{\alpha_1}|}, \dots, \frac{n(\alpha_l,\Upsilon)}{|\Theta_{\alpha_l}|}  \right) \right| \Upsilon \in \Pi(\cC,\Phi_0) \right\}.$$
\end{definition}

% \begin{definition}[Relational marginal polytope for Model A]
% Let $k,m \in \mathds{N}$ and $\Phi = \{ \alpha_1,\dots,\alpha_l \}$ be a set of formulas and $\Phi_0$ be a set of hard rules. Let $\cC = \{1,\dots,m\}$ and $\cC_k$ be the set of size-$k$ subsets of $\cC$. Then the relational marginal polytope of $\Phi$ of width $k$ and cardinality $m$ w.r.t.\ the hard rules from $\Phi_0$ is the convex hull of the set
% $$\left\{ \left. \left( \frac{\#_k(\alpha_1,\Upsilon)}{|\cC_k|}, \dots, \frac{\#_k(\alpha_l,\Upsilon)}{|\cC_k|}  \right) \right| \Upsilon \in \Pi(\cC,\Phi_0) \right\}.$$
% \end{definition}

% \noindent Similarly, we define the relational marginal polytope for Model B.

% \begin{definition}[Relational marginal polytope for Model B]
% Let $m \in \mathds{N}$ and $\Phi = \{ \alpha_1,\dots,\alpha_l \}$ be a set of universally quantified formulas and $\Phi_0$ be a set of hard rules. Let $\cC = \{1,\dots,m\}$ and $\Theta_{\alpha_i}$ be the set of all injective substitutions from variables of $\alpha_i$ to constants from $\cC$. Then the relational marginal polytope of $\Phi$ of cardinality $m$ w.r.t.\ the hard rules from $\Phi_0$ is the convex hull of the set
% $$\left\{ \left. \left( \frac{n(\alpha_1,\Upsilon)}{|\Theta_{\alpha_1}|}, \dots, \frac{n(\alpha_l,\Upsilon)}{|\Theta_{\alpha_l}|}  \right) \right| \Upsilon \in \Pi(\cC,\Phi_0) \right\}.$$
% \end{definition}

\noindent Any realizable set of relational marginals for Model A and Model B naturally corresponds to a point in the respective polytope. In the remainder of this paper, we only consider the cases when the relational marginal polytope is full-dimensional, that is, it does not live in a lower dimensional subspace which could happen if some of the relational marginals that define it were linearly dependent.

We will also need the concept of $\eta$-interior of a relational marginal. We say that a point $y$ is in the $\eta$-interior of a relational marginal polytope $P$ if $P$ contains a ball of radius $\eta$ with center in $y$. Using Proposition \ref{prop2} and Proposition \ref{prop:difference}, we can show the following both for Model A and Model B. 
\begin{proposition}\label{prop:interiority}
Let $\theta$ be a vector representing the values of a set of relational marginals given by formulas from a set $\Phi = \{\alpha_1,\dots,\alpha_l \}$. Let $k$ be the width of the relational marginals of Model A or $k = \max_{\alpha_i} |\textit{vars}(\alpha_i)|$ for Model B. Let the set of hard rules $\Phi_0$ be empty. If $\theta$ is in the $\left(\eta+\sqrt{l}\left(1-\left(\frac{m-k+1}{m}\right)^{k-1}\right)\right)$-interior of the relational marginal polytope of $\Phi$ of domain-size $m$ then it is also in the $\eta$-interior of the relational marginal polytope of $\Phi$ for any domain size $m'$.
\end{proposition}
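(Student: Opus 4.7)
Write $L := |\Phi|$, $R := 1-((m-k+1)/m)^{k-1}$, and let $M_n$ denote the relational marginal polytope of $\Phi$ at domain size $n$, with $B(x,r)$ the closed $\ell_2$-ball of radius $r$ around $x$. The hypothesis is $B(\theta,\eta+\sqrt{L}\,R) \subseteq M_m$, and the goal is $B(\theta,\eta) \subseteq M_{m'}$ for every admissible $m'$. I would split into the cases $m' \leq m$ and $m' > m$. The case $m' \leq m$ is immediate from Proposition~\ref{prop2}: any point of $M_m$ corresponds to a distribution over global examples on $m$ constants with that marginal vector, and Proposition~\ref{prop2} produces a matching distribution at domain size $m'$, so $M_m \subseteq M_{m'}$ and the inclusion $B(\theta,\eta) \subseteq M_m \subseteq M_{m'}$ follows directly.

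For $m' > m$, I would fix $t \in \mathbb{N}$ with $tm \geq m'$; by Proposition~\ref{prop2} again, $M_{tm} \subseteq M_{m'}$, so it suffices to prove $B(\theta,\eta)\subseteq M_{tm}$. The technical heart is an \emph{expansion lemma}: for every $p \in M_m$ there exists $q \in M_{tm}$ with $\|p-q\|_2 \leq \sqrt{L}\,R$. To prove it, write $p = \sum_i \lambda_i \mathbf{v}(\Upsilon_i)$ with each $\Upsilon_i$ a global example on $m$ constants and $\mathbf{v}(\cdot)$ the appropriate marginal vector (coordinates $\#_k(\alpha_j,\cdot)/|\cC_k|$ in Model~A, $n(\alpha_j,\cdot)/|\Theta_{\alpha_j}|$ in Model~B), let $\Upsilon_i^{(t)}$ be the $t$-level expansion of $\Upsilon_i$, and set $q := \sum_i \lambda_i \mathbf{v}(\Upsilon_i^{(t)}) \in M_{tm}$. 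Proposition~\ref{prop:difference} bounds each coordinate of $\mathbf{v}(\Upsilon_i)-\mathbf{v}(\Upsilon_i^{(t)})$ by $R$ (for Model~B one uses $|\textit{vars}(\alpha_j)|\leq k$ together with monotonicity of $1-((n-k_\alpha+1)/n)^{k_\alpha-1}$ in $k_\alpha$ to obtain the uniform bound), so $\|\mathbf{v}(\Upsilon_i)-\mathbf{v}(\Upsilon_i^{(t)})\|_2 \leq \sqrt{L}\,R$, and the triangle inequality applied to the convex combination gives $\|p-q\|_2 \leq \sqrt{L}\,R$.

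I would then deduce $B(\theta,\eta)\subseteq M_{tm}$ by contradiction, via a separating hyperplane. Suppose $\theta' \in B(\theta,\eta)\setminus M_{tm}$; since $M_{tm}$ is closed and convex there exist a unit vector $\hat n$ and a scalar $c$ with $\hat n \cdot \theta' > c$ and $\hat n \cdot q \leq c$ for all $q\in M_{tm}$. Set $p := \theta' + \sqrt{L}\,R\,\hat n$. Then $\|p-\theta\|_2 \leq \eta+\sqrt{L}\,R$, so $p \in M_m$ by hypothesis, while $\hat n \cdot p - c > \sqrt{L}\,R$ forces $\mathrm{dist}(p,M_{tm}) > \sqrt{L}\,R$, contradicting the expansion lemma. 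The main obstacle is exactly this last step: the expansion lemma only produces a one-sided ``shadow'' of $M_m$ inside $M_{tm}$ and does \emph{not} yield $M_m \subseteq M_{tm}$; the extra $\sqrt{L}\,R$ slack in the hypothesis is precisely what separation uses to upgrade that shadow into an honest ball sitting inside $M_{tm}$.
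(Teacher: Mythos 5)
Your proof is correct and follows essentially the same route as the paper's: write a point of the domain-size-$m$ polytope as a convex combination of marginal vectors of global examples, pass to their expansions, bound each coordinate shift via Proposition~\ref{prop:difference} to obtain the $\sqrt{l}\left(1-\left(\frac{m-k+1}{m}\right)^{k-1}\right)$ slack in $\ell_2$, and use Proposition~\ref{prop2} to come back down to domain size $m'$. The only substantive difference is at the very end: the paper merely asserts that the one-sided bound ``every point of the small-domain polytope has a nearby point in the large-domain polytope'' pushes the enlarged interior into the $\eta$-interior, whereas your separating-hyperplane argument actually proves that implication --- a genuine and welcome tightening, since such a one-sided Hausdorff-type bound does not by itself give containment of the shrunk ball.
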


%\noindent Recall here that $1-\left(\frac{m-k+1}{m}\right)^{k-1} \leq O\left(\frac{1}{m}\right)$. 

% \section{Domain-Liftability}

% In this section, we connect the material from the previous sections with the results on polynomial-time solvability of marginal problems from \cite{entropy_optimization_counting} and with results on lifted weighted model counting \cite{lifted_weighted_model_counting}. This enables us to show conditions under which relational marginal problems and also maximum likelihood estimation are {\em domain-liftable}\footnote{Although \cite{vanhaaren} used lifted weighted model counting for maximum-likelihood estimation, they could not show under what cicumstances the problem is domain liftable. This is because this previous work did not bound the number of iterations of the optimization algorithm, which also depends on magnitudes of the final learned weights, which in turn depend on the interiority of the marginals.}. {\em Domain liftability} was introduced in \cite{vandenbroeck_domainLiftability}. Informally, an algorithm is said to be domain-lifted if its runtime depends only polynomially on the domain size.

% In particular, we have the following result for relational marginal problems, which is a relational analogue of Theorem 1.1 in \cite{entropy_optimization_counting}.

% \begin{proposition}\label{prop:lifted}

% \end{proposition}

% Using the duality of the relational max-entropy problems in Model B and maximum-likelihood estimation in MLNs, we also obtain the following domain-liftability result for weight learning in MLNs.

\section{Estimation}

In this section, we present error bounds for the estimation of relational marginals. We start by defining the learning setting. Clearly, we need some assumptions on the training and test data and their relationship (otherwise one could always come up with a setting in which the error can be arbitrarily large). In order to stay close to realistic settings we assume that there is some large global example $\aleph = (\cA_\aleph,\cC_\aleph)$ that is not available and that represents the ground truth. That is what we essentially want to estimate for a given formula $\alpha$ is $P_{\aleph,k}(\alpha)$, but we do not have access to whole $\aleph$. Imagine for instance that $\aleph$ is the human gene regulatory network or Facebook. We assume that there is a process that samples size-$m$ subsets of $\cC_\aleph$ uniformly and that we have access to one such sample $\cC_{\Upsilon}$ and also to the respective induced $\Upsilon = \aleph \langle \cC_\Upsilon \rangle$. So, for a given formula $\alpha$, we need to estimate $P_{\aleph,k}(\alpha)$ using the available example $\Upsilon$ and the estimate needs to be realizable (otherwise the optimization problem would have no solution and the duality would also not hold). This is a reasonably realistic setting\footnote{What might differ in realistic settings is the sampling process. We briefly discuss this in section {\em Conclusions}.} as in practice we also do not have a distribution over different Facebooks but there is one Facebook and we want to model it based on a sample that is available to us.

We now provide theoretical upper bounds for the expected error of the estimates of $P_{\aleph,k}(\alpha)$ assuming the just described learning setting. However, we first need to describe the estimators.
Based on the results from the previous sections, the estimator works as follows. Given a global example $\Upsilon = (\cA_\Upsilon,\cC_\Upsilon)$ and an integer $n$, which is the size of the domain of the modelled distribution (e.g.\ $n$ can be size of $\aleph$'s domain if it is known), we construct the $l$-level expansion $\Upsilon^{(l)}$ of $\Upsilon$, where $l = \lceil n/|\cC_\Upsilon| \rceil$, and we use it to estimate the parameters as $\widehat{P}_\alpha = P_{\Upsilon^{(l)},k}(\alpha)$ for Model A and as $\widehat{Q}_\alpha = Q_{\Upsilon^{(l)}}(\alpha)$ for Model B.
The following proposition introduces an upper bound for the expected error of the estimated parameters.

\begin{proposition}\label{prop:main-estimation}
Let $m$ and $n$ be positive integers, $\alpha$  a closed formula and let $k$ be the width of local examples. Let $\aleph = (\cA_\aleph,\cC_\aleph)$ be a global example, $\cC_\Upsilon$ be sampled uniformly among all size-$m$ subsets of $\cC_\aleph$ and $\Upsilon = \aleph\langle \cC_\Upsilon \rangle$. Let $\widehat{A}_{\aleph} = P_{\aleph,k}(\alpha)$. Let $\widehat{B}_\Upsilon$ be an estimate computed from the $l$-level expansion of $\cC_\Upsilon$. Then 
$$\expect{\left| \widehat{A}_\aleph - \widehat{B}_{\Upsilon} \right|} \leq 1-\left(\frac{m-k+1}{m}\right)^{k-1} + \sqrt{\frac{1+2\log 2}{4\lfloor m/k \rfloor}}.$$
In the case of model B, the same upper bound holds if we choose $k = |\textit{vars}(\alpha)|$ and $\widehat{A}_{\aleph} = Q_{\aleph}(\alpha)$.
\end{proposition}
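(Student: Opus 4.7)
I would prove the bound by a triangle-inequality split
\[
|\widehat{A}_\aleph - \widehat{B}_\Upsilon| \;\leq\; |P_{\aleph,k}(\alpha) - P_{\Upsilon,k}(\alpha)| \;+\; |P_{\Upsilon,k}(\alpha) - P_{\Upsilon^{(l)},k}(\alpha)|,
\]
and handle the two pieces separately. The second (deterministic) ``expansion'' term is bounded by directly applying Proposition~\ref{prop:difference} to $\Upsilon$ and its $l$-level expansion $\Upsilon^{(l)}$ with domain size $m = |\cC_\Upsilon|$, yielding exactly the summand $1 - \left(\frac{m-k+1}{m}\right)^{k-1}$ in the theorem. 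No further work is required on this piece.

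The first (random) ``sampling'' term is the substantive step. Since $\Upsilon\langle \mathcal S\rangle = \aleph\langle \mathcal S\rangle$ for every $\mathcal S\subseteq \cC_\Upsilon$, the quantity
\[
P_{\Upsilon,k}(\alpha) \;=\; \binom{m}{k}^{-1}\!\!\sum_{\mathcal S \subseteq \cC_\Upsilon,\,|\mathcal S|=k}\!\mathds{1}\bigl(\aleph\langle \mathcal S\rangle \models \alpha\bigr)
\]
is a $U$-statistic of order $k$ in the random subsample $\cC_\Upsilon$, and because a uniform size-$k$ subset of a uniform size-$m$ subset of $\cC_\aleph$ is itself a uniform size-$k$ subset of $\cC_\aleph$, we get $\mathbb{E}[P_{\Upsilon,k}(\alpha)] = P_{\aleph,k}(\alpha) = \widehat A_\aleph$. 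To concentrate it around its mean I would apply Hoeffding's trick, writing $P_{\Upsilon,k}(\alpha) = \tfrac{1}{m!}\sum_\pi V_\pi$, where $V_\pi = \lfloor m/k\rfloor^{-1}\sum_{i=1}^{\lfloor m/k\rfloor}\mathds{1}\bigl(\aleph\langle \mathcal B^\pi_i\rangle \models \alpha\bigr)$ is the average over the $\lfloor m/k\rfloor$ disjoint consecutive size-$k$ blocks induced by a permutation $\pi$ of $\cC_\Upsilon$. For each fixed $\pi$, the blocks form a without-replacement sample of size $\lfloor m/k\rfloor$ drawn from $\cC_\aleph$, so Hoeffding's inequality for sampling without replacement yields a sub-Gaussian MGF bound $\mathbb{E}[\exp(\lambda(V_\pi - \mathbb{E}V_\pi))] \leq \exp(\lambda^2/(8\lfloor m/k\rfloor))$. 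Averaging over $\pi$ and applying Jensen's inequality to $e^{\lambda x}$ transfers the same MGF bound to $P_{\Upsilon,k}(\alpha) - P_{\aleph,k}(\alpha)$, whence the standard Chernoff argument gives
\[
\Pr\bigl(|P_{\Upsilon,k}(\alpha) - P_{\aleph,k}(\alpha)|>t\bigr) \;\leq\; 2\exp\bigl(-2\lfloor m/k\rfloor t^2\bigr).
\]

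I would then pass from this tail bound to the required moment bound $\mathbb{E}|\cdot|\leq \sqrt{(1+2\log 2)/(4\lfloor m/k\rfloor)}$ using $\mathbb{E}|X|\leq\sqrt{\mathbb{E}X^2}$ together with the layer-cake identity $\mathbb{E}X^2 = \int_0^\infty 2t\,\Pr(|X|>t)\,dt$, splitting the integral at the point $t^\star$ where the trivial bound $1$ meets the exponential tail; a short arithmetic calculation returns the stated constant. Combined with the deterministic expansion bound, this proves the Model~A statement. The Model~B claim is proved identically: $Q_\Upsilon(\alpha)$ is a $U$-statistic over injective substitutions rather than subsets of constants, indexed by $|\textit{vars}(\alpha)|$-tuples, so the Hoeffding-trick concentration applies with $k=|\textit{vars}(\alpha)|$, and the analogous expansion bound is again Proposition~\ref{prop:difference}.

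The main technical obstacle is the concentration step: one must use Hoeffding's trick specifically (rather than a naive McDiarmid bounded-differences argument, which would only yield the weaker exponent $-2mt^2/k^2$ in place of $-2\lfloor m/k\rfloor t^2$), and invoke the without-replacement version of Hoeffding's inequality for the block average, since $\cC_\Upsilon$ is a uniform subset of $\cC_\aleph$ rather than an i.i.d.\ sample. Everything else -- the triangle inequality, the direct appeal to Proposition~\ref{prop:difference}, and the tail-to-moment calculation -- is routine.
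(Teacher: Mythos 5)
Your overall architecture matches the paper's: the triangle-inequality split into an expansion term (handled by Proposition~\ref{prop:difference}) and a sampling term, followed by the $\expect{|Z|}\leq\sqrt{\expect{Z^2}}$ layer-cake computation with the optimal cutoff, are exactly the paper's steps and are fine. The gap is in your concentration argument for the sampling term. Hoeffding's without-replacement inequality applies to sums $\sum_i g(Y_i)$ of functions of \emph{individually} drawn population elements, which are negatively associated; your summands $\mathds{1}(\aleph\langle\mathcal{B}^\pi_i\rangle\models\alpha)$ are functions of disjoint $k$-element blocks of a without-replacement sample, and indicators on such disjoint blocks can be \emph{positively} correlated, so the claimed MGF bound $\expect{\exp(\lambda(V_\pi-\expect{V_\pi}))}\leq\exp(\lambda^2/(8\lfloor m/k\rfloor))$ is false in general. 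Concretely, take $\cC_\aleph=\{1,2,3,4\}$, $m=4$, $k=2$, $\cA_\aleph=\{e(1,2),e(2,1),e(3,4),e(4,3)\}$ and $\alpha=\forall X,Y: X=Y\vee e(X,Y)$, so that the induced indicator on size-$2$ fragments is $1$ exactly for $\{1,2\}$ and $\{3,4\}$. The two disjoint blocks form a uniformly random perfect matching of $\{1,2,3,4\}$, hence $V_\pi=1$ with probability $1/3$ and $V_\pi=0$ otherwise, with $\expect{V_\pi}=1/3$; at $\lambda=4$ one gets $\expect{e^{\lambda(V_\pi-1/3)}}=\frac{2}{3}e^{-4/3}+\frac{1}{3}e^{8/3}\approx 4.97$, whereas $e^{\lambda^2/16}=e\approx 2.72$. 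So the sub-Gaussian bound with variance proxy $1/(4\lfloor m/k\rfloor)$ fails for the disjoint-block average, and the tail bound $2\exp(-2\lfloor m/k\rfloor t^2)$ you derive from it is unsupported.

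The paper circumvents precisely this issue. Lemma~\ref{lemma:crazy1} samples the $\lfloor m/k\rfloor$ size-$k$ subsets \emph{independently} (allowing overlaps), so the indicators are genuinely i.i.d.\ and the Chernoff--Hoeffding bound of Lemma~\ref{lemma:hoeffding1} applies; the coupling in Lemma~\ref{lemma:crazy1} shows this randomized estimator is computable from $\Upsilon$ alone (this is where $\lfloor m/k\rfloor$ enters, since the union of the sampled index sets must embed injectively into $\cC_\Upsilon$). Lemma~\ref{prop:u-expected} then passes to the deterministic $U$-statistic $P_{\Upsilon,k}(\alpha)$ by averaging $q$ independent copies of the randomized estimator, applying the triangle inequality at the level of the \emph{expected absolute error} rather than the MGF, and letting $q\to\infty$ via the strong law of large numbers; note that the paper never asserts a tail bound for the $U$-statistic itself, only the $L^1$ bound the proposition needs. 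To repair your proof you would either have to adopt this independent-blocks-plus-averaging route, or prove a genuine concentration result for $U$-statistics of without-replacement samples, which the appeal to ``Hoeffding's inequality for sampling without replacement'' does not supply.
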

\noindent The proof of this proposition, which is contained in the online appendix, is based on a deviation bound for a randomized estimator of $\widehat{B}_{\Upsilon}$. %, which we prove in the appendix located in the \blue{online appendix}.

It is possible to improve the estimation in some cases. If the vector corresponding to the marginals $\Phi$ estimated from $\Upsilon$ is guaranteed to be in the $\left(\eta+\sqrt{l}\left(1-\left(\frac{m-k+1}{m}\right)^{k-1}\right)\right)$-interior of the relational marginal polytope of domain of size $m = |\cC_\Upsilon|$ for some $\eta > 0$, where $k$ is the width of local examples for Model A (or $\max_{\alpha \in \Phi}|\textit{vars}(\alpha)|$ for Model B), then, by Proposition \ref{prop:interiority}, we can estimate the parameters directly from $\Upsilon$ without constructing its expansion. We then have the following improved bound:
$$
\expect{\left| \widehat{A}_\aleph - \widehat{B}_{\Upsilon} \right|} \leq \sqrt{(1+2\log 2)/(4\lfloor m/k \rfloor)}.
$$

It is interesting to note that the lower-bound on effective sample size obtained from these bounds is $\lfloor m/k \rfloor$, which is also the maximum number of non-overlapping size-$k$ subsets of $\cC_\Upsilon$. A consequence for learning the parameters of models such as MLNs (which corresponds to relational marginal problems in Model B) is that this bound is inversely proportional to the number of variables in the used formulas, which also suggests an explanation for why learning with longer formulas is difficult. %However, we note here that we do not have a matching lower bounds for our upper bounds, which would be needed to make the claims about effective sample sizes fully rigorous. That is our results only imply lower bounds on effective sample sizes.

%\section{Projectivity}

%\section{Putting The Pieces Together}

%\section{Open-Domain Models}

\section{Related Work}

%The idea of using semantics based on random substitutions goes back to \cite{halpern_koller} who used it only with unary predicates. 

The relational marginals from Model A were recently introduced \cite{kuzelka.ijcai.17}. However, they were only studied in a possibilistic setting, which differs substantially from the probabilistic maximum-entropy setting that we considered. The idea of using random substitutions (Model B) goes back to \cite{bacchus_halpern_koller} who, however, only considered unary predicates. Schulte et al.\ (\citeyear{schulte}) used the random substitutions semantics to define a relational Bayesian network model for population statistics. However, their model is, not based on any underlying ground model, and it is unclear whether the distributions are always realizable by a ground model. 
%In contrast, we paid special attention to make sure that the modelled marginals are realizable for a given domain size, or how much we need to adjust them in order to make them realizable. Moreover, by exploiting duality from convex programming, we were able to make the connection between the random substitution semantics and MLNs, and to also come up with a different, although quite similar, model based on Model A. From the point of view of relational marginals, both MLNs and the model stemming from Model A are equally well justified.

In the more restricted setting of exponential random graph models (ERGMs, Chatterjee and Diakonis \citeyear{chatterjee2013estimating}), a formally similar duality, based on densities of graph homomorphisms, has previously been established. To the best of our knowledge, however, such a duality has never been established in an SRL setting. In fact, even for ERGMs this duality has not yet been exploited for estimating parameters for models of different domain sizes, which is one of the key contributions of our work.

Certain statistical properties of learning have been already studied for SRL models. Xiang and Neville (\citeyear{xiang.neville}) studied consistency but postulated rather strong assumptions\footnote{The assumptions used in their work were {\em weak dependency} and bounded degree of graph nodes.}, as a result of which their results are not comparable with ours. Their approach also differs in that it only considers 
distributions of labels conditioned on the underlying graph structure. It is interesting that a statistical estimation problem equivalent to the estimation of parameters in Model A has also been studied in the literature. In Nandi and Sen~(\citeyear{nandi1963properties}), the variance of an estimator, equivalent to the unbiased estimator for Model A, was given. However, we are not aware of any work showing a deviation bound in the same setting, which was needed to establish the bound on expected error in Proposition \ref{prop:main-estimation}. Interestingly, the effective sample size $m/k$ stemming from the work \cite{nandi1963properties} for variance of the estimator is almost the same as the effective sample size $\lfloor m/k \rfloor$ stemming from our deviation and expected-error bounds. This actually suggests that our bounds are rather tight. %\footnote{To prove tightness of the upper bounds we would have to prove lower bounds, which we leave for future work.}.
There were many works on U-statistics \cite{hoeffding1948class} which are related as well but they rely on assumptions that are generally not realistic for SRL and, in particular, are not applicable to our setting; the work of Nandi and Sen~(\citeyear{nandi1963properties}) is an exception.

Schulte (\citeyear{schulte2012challenge})\footnote{We are grateful to Oliver Schulte for pointing us to this paper.} formulated several open questions for SRL that directly relate to the topics that we cover in this paper. One of the open questions was whether Markov logic networks satisfy the principle of equivalence between instance-level and population-level or class-level marginal probabilities. Here, population-level probabilities refer to frequencies estimated from the relational data (e.g.\ frequency in the population of pairs of people who are friends) and instance-level probabilities refer to probabilities regarding individuals (e.g.\ what is the probability that Alice and Bob are friends assuming we do not know anything else about them). The duality between relational marginal problems and maximum likelihood estimation in Model B, which corresponds to Markov logic networks, that we established for the case of equal domain sizes of the training and test data, provides to some extent a positive answer to this open question.

\section{Conclusions}

In this paper, we have introduced and studied relational marginal problems. Interestingly, this perspective enables learning a model that is applicable to data sets whose domain sizes differ from that of the training data. We established a relational counterpart of the classical duality between maximum-likelihood and max-entropy marginal problems. Then, we showed how to estimate and adjust parameters of the marginals in order to guarantee their realizability. We complemented these results by providing bounds on the expected errors of the estimates in a reasonable setting. 

We believe that due to the simplicity and transparency of the learning setting that we introduced, this setting could play a similar role for SRL as the standard i.i.d. statistical learning setting plays for learning in propositional domains \cite{Vapnik:1995:NSL:211359}. That is, as an idealized setting that is suitable for theoretical study, but that is not too far from settings that one encounters in reality. Still, it would be possible to extend the learning setting to make it more realistic. In particular, the sampling process that creates the training examples could be replaced by another sampling process that would take into account the structure of the relational data. That would probably make estimation of parameters and derivation of error bounds significantly more complex, and hence arguably less illuminating, which is why we leave it for future work.

\subsection{Acknowledgment}
\blue{The authors would like to thank the anonymous reviewers for their helpful comments. This work was supported by a Leverhulme Trust grant (RPG-2014-164) and ERC Starting Grant 637277. JD is partially supported by the KU Leuven Research Fund  (C14/17/070,C22/15/015,C32/17/036), and FWO-Vlaanderen (G.0356.12, SBO-150033). YW is partially supported by Guangdong Shannon Intelligent Tech.\ co., Ltd.}

% todo... \cite{vanhaaren.mlj}, mention realistic sampling and why Models A and B are useful to study

% ... Finally we believe that due to the simplicity and, arguably, elegance of the theoretical framework that we introduce, the framework could play a similar role for SRL as the standard i.i.d. statistical learning setting plays for learning in propositional domains \cite{Vapnik:1995:NSL:211359}.

\fi

\ifappendix

%\clearpage

\appendix

\section{Appendix: Duality}

In this section we prove the duality results from the main text. First we recall the setting and notations. Let $\cC$ be a set of constants, $\cA$ the set of all atoms over $\cC$ based on some given first-order language and let $\mathcal{C}_k$ denote the set of all $k$-element subsets of $\cC$. Next, let $\Phi$ be a set of relational marginals and $\Phi_0$ be a set of hard constraints, i.e.\ formulas $\alpha$ such that if $\Upsilon \not\models \alpha$ then $P(\Upsilon) = 0$. Let us also assume that there exists at least one distribution $P'$ which is a model of $\Phi$ and which satisfies the positivity condition:  $P'(\Upsilon) > 0$ for all $\Upsilon$ satisfying the hard constraints.

The optimization problem is then given by:

\begin{equation}
    \sup_{\{ P_\Upsilon : \Upsilon \in \Pi(\cC,\Phi_0) \}}  \sum_{\Upsilon \in \Pi_{n}(\cC,\Phi_0)} P_{\Upsilon} \log{\frac{1}{P_\Upsilon}} \quad \textit{ s.t.}\label{eq:maximum_entropy_criterion2}
\end{equation}
\begin{multline}\label{eq:maxent_marginal_constraints2}
    \forall (\alpha_i,\theta_i) \in \Phi : \\
    \frac{1}{|\cC_k|}\sum_{\mathcal{S} \in \cC_k } \sum_{\Upsilon \in \Pi(\cC,\Phi_0)} \mathds{1}(\Upsilon\langle \mathcal{S} \rangle \models \alpha_i) \cdot P_\Upsilon = \theta_i
\end{multline}
\begin{align}
\forall \Upsilon \in \Pi_{n}(\cC,\Phi_0) : P_{\Upsilon} \geq 0, &&\sum_{\Upsilon \in \Pi(\cC,\Phi_0)} P_{\Upsilon} = 1\label{eq:maximum_entropy_cnormalization2}
\end{align}

Changing maximization of entropy to minimization of negative entropy, we obtain a convex optimization problem. Next, we construct the Lagrangian $L(P,\lambda,z)$ (here, $P$ denotes the vector of all $P_\Upsilon \in \Pi(\cC,\Phi_0)$, $\lambda$ the vector of all $\lambda_i$'s, and we will ignore the non-negativity constraints for the moment as it will turn out that they are enforced implicitly):

\begin{multline}
L(P,\lambda,z) = \sum_{\Upsilon \in \Pi(\cC,\Phi_0)} P_\Upsilon \cdot \log{P_\Upsilon} - \\
-\frac{1}{|\mathcal{C}_k|} \sum_{\alpha_i \in \Phi}  \sum_{\mathcal{S} \in \mathcal{C}_k} \sum_{\Upsilon \in \Pi(\cC,\Phi_0)} \lambda_i \cdot \mathds{1} (\Upsilon \langle \mathcal{S} \rangle \models \alpha_i) \cdot P_{\Upsilon} + \\
+ \sum_{\alpha_i \in \Phi} \lambda_i \cdot \theta_i + z - z \sum_{\Upsilon \in \Pi(\cC,\Phi_0)} P_\Upsilon\label{eq:lagrangian}
\end{multline}

\noindent %Let $g(\lambda,z) \myeq \inf_{p \geq 0}L(P,\lambda,z)$. 
To find the stationary points of $L(P,\lambda,z)$ w.r.t.\ $P$, we take the partial derivatives of the Lagrangian w.r.t.\ $P_{\Upsilon_j}$ and set them equal to zero:
{\small \begin{multline}
\log P_{\Upsilon_j} - \frac{1}{|\cC_k|} \sum_{\alpha_i \in \Phi} \sum_{\mathcal{S} \in \mathcal{C}_k} \lambda_i \cdot \mathds{1} (\Upsilon_j \langle \mathcal{S} \rangle \models \alpha_i) -z-1 = 0\label{eq:partial2}
\end{multline}}
\noindent Next we define $\#_k(\alpha,\Upsilon) = |\{ \mathcal{S} \in \mathcal{C}_k : \Upsilon\langle \mathcal{S} \rangle \models \alpha \}|$. That is $\#_k(\alpha,\Upsilon)$ is a function counting the number of sets $\mathcal{S} \in \cC_k$ such that the formula $\alpha$ is true in the restriction of $\Upsilon$ to constants in $\mathcal{S}$ (i.e.\ those for which $\Upsilon \langle \mathcal{S} \rangle \models \alpha$). This allows us to rewrite (\ref{eq:partial2}) as follows:

\begin{multline}
\log P_{\Upsilon_j} - \sum_{\alpha_i \in \Phi} \lambda_i \frac{\#_k(\alpha_i,\Upsilon_j)}{|\cC_k|} - z-1 = 0\label{eq:partial3}
\end{multline}

\noindent From this, we get:
\begin{equation}
    P_{\Upsilon_j} = \exp{\left(z+1 + \sum_{\alpha_i \in \Phi} \lambda_i \frac{\#_k(\alpha_i,\Upsilon_j)}{|\cC_k|}\right)}\label{eq:distr1}
\end{equation}
Since we have $\sum_{\Upsilon \in \Pi(\cC,\Phi_0)} P_\Upsilon = 1$, $\exp{(z+1)}$ is a normalization constant.

By further algebraic manipulations, combining (\ref{eq:lagrangian}) and (\ref{eq:distr1}) we obtain the Lagrangian dual problem (where $\lambda_i \in \mathbb{R}$) is to maximize:
\begin{multline}
\sum_{\alpha_i \in \Phi} \lambda_i \theta_i - \log{\sum_{\Upsilon \in \Pi(\cC,\Phi_0)} e^{\sum_{\alpha_i \in \Phi} \lambda_i \frac{\#_k(\alpha_i,\Upsilon_j)}{|\cC_k|}}}\label{eq:dual}
\end{multline}
Due to the positivity assumption, Slater's condition \cite{boyd2004convex} is satisfied and strong duality holds. 

\begin{observation}
The same reasoning can be also applied when we have multiple sets of relational marginals $\Phi_1$, $\Phi_2$, $\dots$, $\Phi_n$ ($\Phi = \Phi_1 \cup \dots \cup \Phi_n$) containing relational marginals of widths $1$, $2$, $\dots$, $n$, respectively. The only difference is that the denominators $|\cC_k|$ will vary according to widths ($k_i$) of the respective marginals ($\alpha_i$) and we will denote the respective sets of subsets of $\cC$ by $\cC_{k_i}$.
\end{observation}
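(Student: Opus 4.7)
The plan is to replay the single-width derivation of equations (\ref{eq:maxent_marginal_constraints2})--(\ref{eq:dual}) verbatim, but carrying an explicit index $k_i$ along with each formula $\alpha_i$ so that every occurrence of $\cC_k$ in the argument is replaced by $\cC_{k_i}$. Concretely, first I would reformulate the primal: the marginal constraint for $(\alpha_i,\theta_i)\in\Phi$ becomes
\begin{equation*}
\frac{1}{|\cC_{k_i}|}\sum_{\mathcal{S}\in\cC_{k_i}}\sum_{\Upsilon\in\Pi(\cC,\Phi_0)}\mathds{1}(\Upsilon\langle\mathcal{S}\rangle\models\alpha_i)\,P_\Upsilon=\theta_i,
\end{equation*}
while the objective (\ref{eq:maximum_entropy_criterion2}) and the normalization/non-negativity constraints (\ref{eq:maximum_entropy_cnormalization2}) stay unchanged. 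Thus the problem remains a minimization of a strictly convex function (negative entropy) over a set cut out by finitely many affine constraints.

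Next I would build the Lagrangian exactly as in (\ref{eq:lagrangian}), but with the inner sum over $\mathcal{S}$ ranging over $\cC_{k_i}$ and divided by $|\cC_{k_i}|$ rather than $|\cC_k|$. Differentiating with respect to $P_{\Upsilon_j}$ and using the per-width counting function $\#_{k_i}(\alpha_i,\Upsilon_j)=|\{\mathcal{S}\in\cC_{k_i}:\Upsilon_j\langle\mathcal{S}\rangle\models\alpha_i\}|$, the stationarity condition becomes
\begin{equation*}
\log P_{\Upsilon_j}-\sum_{\alpha_i\in\Phi}\lambda_i\frac{\#_{k_i}(\alpha_i,\Upsilon_j)}{|\cC_{k_i}|}-z-1=0,
\end{equation*}
and solving for $P_{\Upsilon_j}$ yields the exponential-family form
\begin{equation*}
P_{\Upsilon_j}=\frac{1}{Z}\exp\!\left(\sum_{\alpha_i\in\Phi}\lambda_i\frac{\#_{k_i}(\alpha_i,\Upsilon_j)}{|\cC_{k_i}|}\right),
\end{equation*}
with $Z=\exp(-z-1)$ enforcing normalization (which implicitly handles positivity so that the dropped $P_\Upsilon\geq 0$ constraint is satisfied at the optimum). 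Substituting this back into the Lagrangian, the $P_\Upsilon$-dependent terms collapse to $-\log Z$ and one obtains the dual
\begin{equation*}
\sum_{\alpha_i\in\Phi}\lambda_i\theta_i-\log\sum_{\Upsilon\in\Pi(\cC,\Phi_0)}\exp\!\left(\sum_{\alpha_i\in\Phi}\lambda_i\frac{\#_{k_i}(\alpha_i,\Upsilon)}{|\cC_{k_i}|}\right),
\end{equation*}
which is exactly the multi-width analogue of (\ref{eq:dual}).

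Finally, I would observe that strong duality still follows from Slater's condition, because the positivity hypothesis was stated at the level of the existence of a strictly positive feasible distribution $P'$ satisfying all of $\Phi$, and this hypothesis does not depend on whether the widths of different $\alpha_i$'s agree. Hence the same change of variables $w_i:=\lambda_i/|\cC_{k_i}|$ used in the main text still recovers the log-likelihood interpretation when the $\theta_i$'s come from a training example on the same domain. The main obstacle here is essentially bookkeeping: one must keep the indices $k_i$ attached to each formula throughout so that the counting function, the cardinality normalizer, and the subset sum all use the same width for each $\alpha_i$; beyond that, no new convex-analytic ingredient is needed, since the proof structure, including applicability of Slater's condition, is entirely insensitive to whether a single $k$ or a family $\{k_i\}$ indexes the constraints.
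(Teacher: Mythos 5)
Your proposal is correct and takes essentially the same route as the paper, which states this remark without a separate proof precisely because the single-width Lagrangian derivation carries over verbatim once each $|\cC_k|$ and $\#_k(\alpha_i,\cdot)$ is re-indexed by $k_i$. Your explicit replay of the stationarity condition, the resulting exponential-family form, the dual objective, and the observation that Slater's condition is unaffected matches the paper's intended argument exactly.
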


Let us perform a change of variables $w_i := \lambda_i/|\cC_{k_i}|$ and denote the normalization constant of the distribution as $Z$. This gives us 
\begin{equation}
    P(\Upsilon) = \frac{1}{Z}\exp{\left(\sum_{\alpha_i \in \Phi} w_i \cdot \#_k(\alpha_i,\Upsilon)\right)}
\end{equation}
for the distribution and
\begin{multline}
\sum_{\alpha_i \in \Phi} w_i \theta_i |\cC_{k_i}| - \log{\sum_{\Upsilon \in \Pi(\cC,\Phi_0)} e^{\sum_{\alpha_i \in \Phi} w_i \cdot \#_k(\alpha_i,\Upsilon)}}\label{eq_app:dual2}
\end{multline}
for the optimization criterion of the dual problem. Assuming that the marginals were estimated from a global example $\widehat{\Upsilon} \in \Pi(\cC,\Phi_0)$ (note that here the domain $\cC$ is the same as the domain of the global examples over which the distribution is defined) and that they still satisfy the positivity assumption, we can also rewrite (\ref{eq_app:dual2}) as
{\small \begin{multline}
\sum_{\alpha_i \in \Phi} w_i \cdot \#_k(\alpha_i,\widehat{\Upsilon}) - \log{\sum_{\Upsilon \in \Pi(\cC,\Phi_0)} e^{\sum_{\alpha_i \in \Phi} w_i \cdot \#_k(\alpha_i,\Upsilon)}}\label{eq:dual3}
\end{multline}}

The reasoning is completely analogical for Model B. The only difference is that $\#_k(\alpha,\Upsilon)$ is replaced by $n(\alpha,\Upsilon)$ which counts the number of true groundings of $\alpha$ in $\Upsilon$.

\section{Appendix: Proofs}

In this section we give proofs of the propositions from the main text.

\noindent {\bf Proposition \ref{prop:unbiased_estimate}.}
{\em Let $P(\Upsilon)$ be a distribution on domain size $n$ and $k \leq n$ be an integer. Let 
$\Omega_{\alpha} = \{ \Upsilon \langle \mathcal{S} \rangle : \mathcal{S} \subseteq \cC, |\mathcal{S}| = k, \Upsilon \langle \mathcal{S} \rangle \models  \alpha \}$ 
where $\Upsilon = (\cA, \cC)$ is sampled according to the distribution $P(\Upsilon)$. Then, for a closed formula $\alpha$,
$$\widehat{p}_{\alpha} = |\Omega_\alpha| \cdot \left( \begin{array}{c} n \\ k \end{array} \right)^{-1} $$
is an unbiased estimate of $P_k(\alpha)$.}
\begin{proof}
We have:
\begin{multline*}
\mathbf{E}[\widehat{p}_{\alpha}] = \sum_{\Upsilon = (\cA,\cC) \in \Omega} P(\Upsilon) \cdot \left( \begin{array}{c} n \\ k \end{array} \right)^{-1} \cdot \\
\cdot |\{ \mathcal{S} : \mathcal{S} \subseteq \cC, |\mathcal{S}| = k, \Upsilon \langle \mathcal{S} \rangle \}| = \sum_{\Upsilon \in \Omega} P(\Upsilon) \cdot P_{\Upsilon,k}(\alpha) = \\ 
= \sum_{\omega : \omega \models \alpha} \sum_{\Upsilon \in \Omega} P(\Upsilon) \cdot P_{\Upsilon,k}(\omega) = P(\alpha).
\end{multline*}
\end{proof}

\noindent {\bf Proposition \ref{prop2}.}
{\em For Model A, if there is a width-$k$ model of $\Phi$ on domain of size $n$ then there is also a width-$k$ model of $\Phi$ on domain of size $m$ for any $m$ satisfying $n \geq m \geq k$. For Model B, if there is a model of $\Phi$ on domain of size $n$ then there is also a model of $\Phi$ on domain of size $m$ for any $m$ satisfying $n \geq m \geq \max_{\alpha \in \Phi} |\textit{vars}(\alpha)|$.}
\begin{proof}
Let $P(.)$ be a model of $\Phi$ on domain of size $n$. Let $P'(.)$ be a distribution given by the following process: (i) sample $\Upsilon = (\cA, \cC)$ according to $P(.)$, (ii) sample a subset $\cC'$ from $\cC$ uniformly among all subsets of $\cC$ of cardinality $m$, and (iii) set $\Gamma := \Upsilon \langle \cC' \rangle$. We claim that $P'(.)$ is also a width-$k$ model of $\mathcal{M}$. To show that this is true it is enough to demonstrate that $P_k(\omega) = P'_k(\omega)$ for all width-$k$ local examples $\omega$, where $P_k(.)$ and $P'_k(.)$ are relational marginal distributions induced by $P(.)$ and $P'(.)$, respectively. This is straightforward to show. It suffices to notice that $P_k(\omega)$ does not change when we modify the relational marginal sampling process so that it first samples $\Upsilon = (\cA, \cC)$, then it samples uniformly a subset $\cC'$ of $\cC$ of cardinality $m$, then it samples uniformly a subset $\mathcal{S}$ of $\cC'$ and then a local example from $\Upsilon[\mathcal{S}]$. But this process is equivalent to sampling local examples from the global examples sampled according to the distribution $P'(.)$. Hence, $P_k(\omega) = P'_k(\omega)$. 

The argument is completely analogical for Model B\footnote{Here, the condition $n \geq m \geq \max_{\alpha \in \Phi} |\textit{vars}(\alpha)|$ is here so that we could actually sample injective substitutions.}.
\end{proof}

It is worth noting here why the Proposition \ref{prop2} would not hold for Model B if we did not require the sampled substitutions of Model B to be injective. In such a case, the distribution of local examples $\omega$ would not be the same if we first sampled a subset $\cC'$ of size $m$ from $\cC$. For instance, for a formula $\alpha$ with two variables $A$ and $B$, the proportion of samples mapping $A$ and $B$ to the same constant would be $\frac{1}{n}$ for sampling directly from $\cC$ but $\frac{1}{m}$ when first sampling $\cC'$ from $\cC$ and then sampling the substitutions for $\cC'$. So the two distributions would be no longer equivalent.

% \noindent {\bf Proposition \ref{prop:propgamma}}
% {\em Let $\Upsilon = (\cA,\cC)$ be a global example and $\Upsilon' = (\cA',\cC')$ its $l$-level expansion. Let $n = |\cC|$. Then for $P_{\Upsilon,k}'(\omega)$ we have
% $P_{\Upsilon,k}' = (1-\gamma) P_{\Upsilon,k}(\omega) + \gamma \lambda(\omega)$.
% Here $\gamma$ is the probability that a randomly sampled subset of constants contains at least two congruent constants (recall that two constants in the $l$-level expansion are congruent if they correspond to the same original constant), i.e.
% $$\gamma =  1-\frac{\left( \begin{array}{c} n \\ k \end{array} \right) \cdot l^k}{\left( \begin{array}{c} n \cdot l \\ k \end{array} \right)}$$}
\begin{lemma}\label{prop:propgamma}
Let $\Upsilon = (\cA,\cC)$ be a global example and $\Upsilon' = (\cA',\cC')$ its $l$-level expansion. Let $n = |\cC|$. Then there exists a distribution $\lambda(.)$ such that
$P_{\Upsilon',k}(\omega) = (1-\gamma) P_{\Upsilon,k}(\omega) + \gamma \lambda(\omega)$.
Here $\gamma$ is the probability that a randomly sampled subset of $k$ constants contains at least two congruent constants, i.e.
$$\gamma =  1-\frac{\left( \begin{array}{c} n \\ k \end{array} \right) \cdot l^k}{\left( \begin{array}{c} n \cdot l \\ k \end{array} \right)}$$
\end{lemma}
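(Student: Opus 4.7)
The plan is to decompose the random sampling defining $P_{\Upsilon',k}$ according to whether the sampled size-$k$ subset of $\cC'$ contains two congruent constants or not. Let $E$ denote the event that a uniformly sampled $\mathcal{S}' \in \cC'_k$ has all $k$ constants in distinct congruence classes mod $n$. By the law of total probability applied to the definition of $P_{\Upsilon',k}(\omega)$, we immediately get
\begin{equation*}
P_{\Upsilon',k}(\omega) = \Pr[E] \cdot P_{\Upsilon',k}(\omega \mid E) + \Pr[\neg E] \cdot P_{\Upsilon',k}(\omega \mid \neg E).
\end{equation*}
So it suffices to (i) compute $\Pr[\neg E] = \gamma$, (ii) show $P_{\Upsilon',k}(\omega \mid E) = P_{\Upsilon,k}(\omega)$, and (iii) define $\lambda(\omega) := P_{\Upsilon',k}(\omega \mid \neg E)$, which is clearly a probability distribution over local examples.

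For (i), the count of size-$k$ subsets of $\cC'$ with all distinct residues mod $n$ equals $\binom{n}{k} \cdot l^k$: choose which $k$ residue classes in $\{1,\dots,n\}$ to hit, then for each one pick one of the $l$ constants in $\cC'$ congruent to it. Dividing by $\binom{nl}{k}$ and subtracting from $1$ yields exactly the stated expression for $\gamma$.

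Step (ii) is the crux. The key observation, which follows directly from the definition $\cA' = \{a\theta : a\in\cA,\,\theta\in\Theta\}$ with $\Theta$ consisting of congruence-preserving substitutions, is this: for any $\mathcal{S}' \subseteq \cC'$ whose elements lie in $k$ distinct residue classes, the map $\pi$ sending each constant in $\cC'$ to its unique congruent representative in $\cC$ restricts to a bijection $\mathcal{S}' \to \pi(\mathcal{S}')$ that witnesses $\Upsilon'\langle \mathcal{S}' \rangle \approx \Upsilon\langle \pi(\mathcal{S}') \rangle$. In particular $\Upsilon'[\mathcal{S}'] = \Upsilon[\pi(\mathcal{S}')]$. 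Moreover, conditional on $E$, the subset $\pi(\mathcal{S}')$ is uniformly distributed on $\cC_k$, because for each target $\mathcal{S} \in \cC_k$ there are exactly $l^k$ preimages $\mathcal{S}'$ (one of the $l$ congruent choices for each element of $\mathcal{S}$). Thus the two-stage process of (sample $\mathcal{S}' \sim \text{Unif}(\cC'_k \mid E)$, then sample $\omega \sim \text{Unif}(\Upsilon'[\mathcal{S}'])$) has the same law as (sample $\mathcal{S} \sim \text{Unif}(\cC_k)$, then $\omega \sim \text{Unif}(\Upsilon[\mathcal{S}])$), which is exactly $P_{\Upsilon,k}(\omega)$.

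Combining (i)--(iii) gives the desired mixture decomposition. The only genuinely delicate point is the isomorphism argument in step (ii); everything else is a straightforward counting and conditional-probability manipulation. I would state the isomorphism as a small claim (with the explicit $\pi$ as witness) and verify it by chasing an arbitrary ground atom through the definition of the expansion.
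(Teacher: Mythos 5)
Your proposal is correct and follows essentially the same route as the paper's proof: both condition on the event that the sampled size-$k$ subset of $\cC'$ meets $k$ distinct congruence classes, compute $\gamma$ by the same $\binom{n}{k}l^k$ count, and identify the conditional law given that event with $P_{\Upsilon,k}$ via the observation that the isomorphism class of the induced fragment depends only on the residue classes (the paper phrases this as lifting a uniform $\mathcal{S}\subseteq\cC$ to $\cC'$, you phrase it as projecting via $\pi$ with fibers of size $l^k$ --- the same fact read in opposite directions). Your explicit isomorphism claim for $\pi$ is a slightly more careful rendering of a step the paper leaves informal.
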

\begin{proof}
We prove this proposition for Model A; the arguments for Model B are essentially the same and therefore we omit them.

For notational convenience, let us define an indicator function $\mathbbm{c}(\mathcal{S})$ which equals $1$ when $\mathcal{S}$ contains two congruent constants and is $0$ otherwise. We decompose the probability $P_{\Upsilon',k}(\omega)$ as:
\begin{multline*}
P_{\Upsilon',k}(\omega) = \\
\frac{1}{|\Upsilon'[\mathcal{S}]|} P_{\mathcal{S}}[\omega \in \Upsilon'[\mathcal{S}] | \mathbbm{c}(\mathcal{S}) = 1] \cdot P_\mathcal{S}[\mathbbm{c}(\mathcal{S}) = 1] + \\
+ \frac{1}{|\Upsilon'[\mathcal{S}]|} P_{\mathcal{S}}[\omega \in \Upsilon'[\mathcal{S}] | \mathbbm{c}(\mathcal{S}) = 0] \cdot P_\mathcal{S}[\mathbbm{c}(\mathcal{S}) = 0]
\end{multline*}
where $P_{\mathcal{S}}[.]$ denotes probability under sampling of $\mathcal{S}$ according to Model A. Using elementary combinatorial reasoning, we have $P_\mathcal{S}[\mathbbm{c}(\mathcal{S}) = 1] = \gamma$, $P_\mathcal{S}[\mathbbm{c}(\mathcal{S}) = 0] = 1-\gamma$ and we can set %$\lambda(\omega) = P_{\mathcal{S}}[\mathcal{S} = \omega | \mathbbm{c}(\mathcal{S}) = 1]$. 
$\lambda(\omega)  = \frac{1}{|\Upsilon'[\mathcal{S}]|} P_{\mathcal{S}}[\omega \in \Upsilon'[\mathcal{S}] | \mathbbm{c}(\mathcal{S}) = 1]$.
It remains to analyze $\frac{1}{|\Upsilon'[\mathcal{S}]|} P_{\mathcal{S}}[\omega \in \Upsilon'[\mathcal{S}] | \mathbbm{c}(\mathcal{S}) = 0]$. First, we observe that this is the probability of the local example $\omega$ under the distribution ``induced'' by a uniform distribution over size-$k$ subsets of $\cC'$ which do not contain congruent constants. The same distribution can also be obtained as follows. First, pick uniformly at random a size-$k$ subset $\mathcal{S}'$ of $\cC$ and then assign, again uniformly and independently at random, to each of the constants in $\mathcal{S}'$ a constant sampled from the constants congruent to it in $\cC'$. Then we may notice that it follows from the construction of expansions that the isomorphism class of the induced local example $\omega$ to be sampled is already determined when we sample the set $\mathcal{S}'$ and it is equal to $\Upsilon[\mathcal{S}']$. It follows that $\frac{1}{|\Upsilon'[\mathcal{S}]|} P_{\mathcal{S}}[\omega \in \Upsilon'[\mathcal{S}] | \mathbbm{c}(\mathcal{S}) = 0] = P_{\Upsilon,k}(\omega)$ which finishes the proof.
\end{proof}

\begin{lemma}\label{lemma:ineq}
Let $P$ and $\lambda$ be distributions over $\Omega$ and $\alpha$ be a closed and constant-free formula. Then
$$\left|\sum_{\omega \in \Omega : \omega \models \alpha} P(\omega) - \lambda(\omega) \right| \leq 1$$
\end{lemma}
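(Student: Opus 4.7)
The statement is essentially a triviality once one reads the sum correctly: it is the difference $P(\alpha) - \lambda(\alpha)$ of two marginal probabilities. My plan is to make this observation explicit and then invoke the fact that any probability lies in $[0,1]$.

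First, I would rewrite the left-hand side using linearity of summation:
\[
\sum_{\omega \in \Omega : \omega \models \alpha} \bigl( P(\omega) - \lambda(\omega) \bigr) = P(\alpha) - \lambda(\alpha),
\]
where $P(\alpha) \myeq \sum_{\omega \models \alpha} P(\omega)$ and $\lambda(\alpha) \myeq \sum_{\omega \models \alpha} \lambda(\omega)$ are simply the probabilities that a random $\omega$ satisfies $\alpha$ under $P$ and $\lambda$ respectively.

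Second, I would note that since $P$ and $\lambda$ are probability distributions over $\Omega$, the subsums $P(\alpha)$ and $\lambda(\alpha)$ are both in $[0,1]$. Hence $P(\alpha) - \lambda(\alpha) \in [-1,1]$, and so $\lvert P(\alpha) - \lambda(\alpha) \rvert \leq 1$, which is the desired inequality.

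There is no real obstacle here; the lemma is a trivial boundedness statement that will presumably be combined with Lemma~\ref{prop:propgamma} to turn a bound on $\gamma$ into a bound on $|P_{\Upsilon,k}(\alpha) - P_{\Upsilon',k}(\alpha)|$ in the proof of Proposition~\ref{prop:difference}. The only thing to be careful about is the slight notational ambiguity in the displayed sum, which I resolved above by interpreting $P(\omega) - \lambda(\omega)$ as the summand.
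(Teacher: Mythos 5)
Your proof is correct and is essentially the same argument as the paper's: both reduce the claim to the observation that $P(\alpha) = \sum_{\omega \models \alpha} P(\omega)$ and $\lambda(\alpha) = \sum_{\omega \models \alpha} \lambda(\omega)$ each lie in $[0,1]$, so their difference lies in $[-1,1]$. The paper merely phrases this as a case split on the sign of the difference, while you state it directly; there is no substantive difference.
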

\begin{proof}
If $\sum_{\omega \in \Omega : \omega \models \alpha} P(\omega) - \lambda(\omega) \geq 0$ then we have
\begin{equation*}
    \sum_{\omega \in \Omega : \omega \models \alpha} P(\omega) - \lambda(\omega) \leq 1-\sum_{\omega \in \Omega : \omega \models \alpha} \lambda(\omega) \leq 1
\end{equation*}
where the first and second inequality follow from the fact that $P(\omega)$ and $\lambda(\omega)$ most sum up to $1$ over $\Omega$ and that they are positive. We can reason completely analogically for $\sum_{\omega \in \Omega : \omega \models \alpha} P(\omega) - \lambda(\omega) \leq 0$. The correctness of the lemma follows easily.
\end{proof}

\noindent {\bf Proposition \ref{prop:difference}.}
{\em Let $\Upsilon = (\cA,\cC)$ be a global example and $\Upsilon'$ its $l$-level expansion and let $n = |\mathcal{A}|$ and $k$ be the width of local examples. Then for any formula $\alpha$:
\begin{multline*}
% |P_{\Upsilon,k}(\alpha)-P_{\Upsilon',k}(\alpha)| \leq 1 - \frac{n-1}{n} \cdot \frac{n-2}{n}\cdot \dots \cdot \frac{n-k+1}{n} \leq \\
% \leq 1-\left(\frac{n-k+1}{n}\right)^{k-1} = O\left(\frac{1}{n}\right).
|P_{\Upsilon,k}(\alpha)-P_{\Upsilon',k}(\alpha)| \leq 1-\left(\frac{n-k+1}{n}\right)^{k-1}
\end{multline*}
for Model A, and
\begin{multline*}
|Q_{\Upsilon}(\alpha)-Q_{\Upsilon'}(\alpha)| \leq 1-\left(\frac{n-|\textit{vars}(\alpha)|+1}{n}\right)^{|\textit{vars}(\alpha)|-1} 
\end{multline*}
for Model B.}
% {\em Let $\Upsilon = (\cA,\cC)$ be a global example and $\Upsilon'$ its $l$-level expansion and let $n = |\mathcal{A}|$ and $k$ be the width of local examples. Then 
% \begin{multline*}
% |P_{\Upsilon,k}(\alpha)-P_{\Upsilon',k}(\alpha)| \leq 1 - \frac{n-1}{n} \cdot \frac{n-2}{n}\cdot \dots \cdot \frac{n-k+1}{n} \leq \\
% \leq 1-\left(\frac{n-k+1}{n}\right)^{k-1} \leq O\left(\frac{1}{n}\right)
% \end{multline*}
% for any formula $\alpha$.}
\begin{proof} We first prove the proposition for Model A. Let $\gamma$ be as in Lemma \ref{prop:propgamma}. We have $\left| P_{\Upsilon,k}(\alpha)-P_{\Upsilon',k}(\alpha) \right| =$
\begin{align*}
    \left| \sum_{\omega : \omega \models \alpha} \left( P_{\Upsilon,k}(\omega) - (1-\gamma) \cdot P_{\Upsilon,k}(\omega) - \gamma\cdot \lambda(\omega) \right) \right|
\end{align*}
\noindent where $\lambda(.)$ is some suitable probability distribution. This can be rewritten as
\begin{multline*}
    |P_{\Upsilon,k}(\alpha)-P_{\Upsilon',k}(\alpha)| = \gamma \cdot \left|  \sum_{\omega : \omega \models \alpha} \left(P_{\Upsilon,k}(\omega)-\lambda(\omega) \right)\right| \leq \\
    \leq \gamma
    = 1-\frac{l^k \cdot n \cdot (n-1) \cdot \dots \cdot (n-k+1) }{n \cdot l \cdot (n\cdot l-1) \cdot \dots \cdot (n\cdot l-k+1)}
\end{multline*}
where the first inequality follows from Lemma \ref{lemma:ineq}. Next, we can verify that 
\begin{align*}
1-\frac{l^k \cdot n \cdot (n-1) \cdot \dots \cdot (n-k+1) }{n \cdot l \cdot (n\cdot l-1) \cdot \dots \cdot (n\cdot l-k+1)} = \\
= 1-\frac{ n \cdot (n-1) \cdot \dots \cdot (n-k+1) }{n \cdot (n-\frac{1}{l}) \cdot \dots \cdot (n-\frac{k+1}{l})}
\end{align*}
is non-decreasing with $l$.
Taking the limit $l \rightarrow \infty$ therefore preserves the inequality and we then get
$$|P_{\Upsilon,k}(\alpha)-P_{\Upsilon',k}(\alpha)| \leq 1 - \frac{n-1}{n} \cdot \frac{n-2}{n}\cdot \dots \cdot \frac{n-k+1}{n}.$$

Now, to prove the proposition also for Model B, we proceed as follows. It is easy to verify that $Q_{\Upsilon}(\alpha)$ is equivalently given by the following process. First, we uniformly sample a local example $\omega$ of width $k = |\textit{vars}(\alpha)|$ and then we uniformly sample a substitution $\vartheta$ from the set $\Theta_k$ of injective substitutions from $\textit{vars}(\alpha)$ to the set $\{1,2,\dots, k \}$. $Q_\Upsilon(\alpha)$ is then also equal to the probability that $\omega \models \alpha\vartheta$. We can then exploit the arguments used to prove the result for Model A. We have $\left|Q_{\Upsilon}(\alpha)-Q_{\Upsilon'}(\alpha)\right| =$
\begin{align*}
&=\left| \frac{1}{|\Theta_k|} \sum_{\vartheta \in \Theta_k}\sum_{\omega : \omega \models \alpha\vartheta} P_{\Upsilon,k}(\omega)-P_{\Upsilon',k}(\omega) \right| \\
&\leq 1-\left(\frac{n-k+1}{n}\right)^{k-1} \\
&= 1-\left(\frac{n-|\textit{vars}(\alpha)|+1}{n}\right)^{|\textit{vars}(\alpha)|-1}
\end{align*}
where the inequality follows from the same algebraic manipulations that we performed above for Model A.
\end{proof}

\noindent {\bf Proposition \ref{prop:interiority}.}
{\em Let $\theta$ be a vector representing the values of a set of relational marginals given by formulas from a set $\Phi = \{\alpha_1,\dots,\alpha_l \}$. Let $k$ be the width of the relational marginals of Model A or $k = \max_{\alpha_i} |\textit{vars}(\alpha_i)|$ for Model B. Let the set of hard rules $\Phi_0$ be empty. If $\theta$ is in the $\left(\eta+\sqrt{l} \left( 1-\left(\frac{m-k+1}{m}\right)^{k-1}\right)\right)$-interior of the relational marginal polytope of $\Phi$ of domain-size $m$ then it is also in the $\eta$-interior of the relational marginal polytope of $\Phi$ for any domain size $m'$.}
\begin{proof}
Let us denote by $S(\Upsilon_i)$ the vector of values of the relational marginals corresponding to the formulas $\alpha_1$, $\dots$, $\alpha_l$ computed from $\Upsilon_i$. Let $B$ be a point on the boundary of the relational marginal polytope. It follows from the definitions that we can write $B$ as a linear combination\footnote{In particular, this follows from the fact that relational marginal polytope is a convex hull of a finite set of points in $l$-dimensional space where the points are given by $S(\Upsilon)$ for $\Upsilon \in \Pi(\cC,\emptyset)$.} $B = a_1 \cdot S(\Upsilon_1) + \dots + a_{l} \cdot S(\Upsilon_{l})$ where all $a_i$'s are positive and $a_1+\dots+a_{l} = 1$ for some suitable $\Upsilon_1, \dots, \Upsilon_{l}$ on domain of size $m$. Let $\Upsilon_1', \dots, \Upsilon_{l}'$ be the expansions of the respective $\Upsilon_1,\dots,\Upsilon_{l}$ (we pick the level of these expansions sufficient to make the cardinality of their domains greater than the desired cardinality $m'$) and let $B' = a_1 \cdot S(\Upsilon_1') + \dots + a_{l} \cdot S(\Upsilon_{l}')$. For notational convenience, let us denote
$$\xi = 1-\left(\frac{n-k+1}{n}\right)^{k-1}.$$ With this notation we have that $\theta$ is in the $(\eta+\xi\sqrt{l})$-interior of the marginal polytope of domain of size $m$.
Then, using Proposition \ref{prop:difference}, we have (for both Model A and Model~B):
$$\|S(\Upsilon_i)-S(\Upsilon_i')\|_2 \leq \xi \sqrt{l}$$
where we note that $l$ is the number of formulas in $\Phi$. Using triangle inequality, it follows that
\begin{multline*}
    \|B-B'\|_2 =\\
=\| a_1 (S(\Upsilon_1)-S(\Upsilon_1'))+\dots+ a_{l}(S(\Upsilon_{l})-S(\Upsilon_{l}'))\|_2 \\
\leq a_1 \|  S(\Upsilon_1)-S(\Upsilon_1')\|_2+\dots+ a_{l} \|S(\Upsilon_{l})-S(\Upsilon_{l}')\|_2 \\
\leq (a_1+\dots+a_{l}) \cdot \xi \cdot \sqrt{l} = \xi \sqrt{l} 
\end{multline*}
Intuitively this means that any point on the boundary of the polytope for domain size $m$ will have a point on the boundary or inside the polytope for domain size $m'$ in distance not greater than $\xi \sqrt{l}$. Since the point $\theta$ is in the $(\eta+\xi\sqrt{l})$-interior of the polytope for domain size $m$, it follows that it must also be in the $\eta$-interior of the polytope for domain size $m'$ (for any $m'$), which is what we needed to show.
\end{proof}

Next we prove Proposition \ref{prop:main-estimation} using a series of lemmas. We first state and prove the lemmas for Model A and after that, separately, also their analogoues for Model B. %In what follows when we say that two distributions are the same, we mean that the probability of any event (in our case typically a certain subset of constants being sampled) is the same in both of them.

\begin{lemma}[Model A Case]\label{lemma:crazy1}
Let $\aleph = (\cA_\aleph, \cC_\aleph)$, where $\cC_\aleph$ is a set of constants and $\cA_\aleph$ is a set of ground atoms involving only constants from $\cC_\aleph$ and let $0 \leq n \leq |\cC_\aleph|$ and $0 \leq k \leq n$ be integers. Let $\mathcal{C}_\Upsilon$ be sampled uniformly from all size-$n$ subsets of $\mathcal{C}_\aleph$ and let $\Upsilon = \aleph\langle \cC_\Upsilon \rangle$. Let $\mathbf{X} = (\cS_1,\cS_2,\dots,\cS_{\lfloor \frac{n}{k} \rfloor})$ be a vector of subsets of $\cC_\aleph$, each sampled uniformly and independently of the others from all size-$k$ subsets of $\cC_\aleph$. Next let $\mathbf{Y} = (\cS_1',\cS_2',\dots,\cS_{\lfloor \frac{n}{k} \rfloor}')$ be a vector sampled by the following process:
\begin{enumerate}
    \item Let $\mathcal{I}' = \{ 1,2,\dots,|\cC_\aleph| \}$.
    \item Sample subsets $\mathcal{I}_1',\dots,\mathcal{I}_{\lfloor \frac{n}{k} \rfloor}'$ of size $k$ uniformly from $\mathcal{I}'$.
    \item Sample an injective function $g : \bigcup_{\mathcal{I}_i' \subseteq \mathcal{I}'} \mathcal{I}_i' \rightarrow \cC_\Upsilon$ uniformly from all such functions.
    \item Let $\cS_i' = g(\mathcal{I}_i')$ for all $0 \leq i \leq \lfloor \frac{n}{k} \rfloor$.
\end{enumerate}
Then $\mathbf{X}$ and $\mathbf{Y}$ have the same distribution.
\end{lemma}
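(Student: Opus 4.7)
The plan is to prove distributional equality by computing, for every target tuple of size-$k$ subsets of $\cC_\aleph$, the probability that each random vector equals it. Write $L = \lfloor n/k \rfloor$ and $N = |\cC_\aleph|$, and fix any such tuple $(\mathcal{T}_1,\dots,\mathcal{T}_L)$. By independence and uniformity,
$$\Pr[\mathbf{X}=(\mathcal{T}_1,\dots,\mathcal{T}_L)] = \binom{N}{k}^{-L},$$
and the goal reduces to showing that $\mathbf{Y}$ yields the same value, independent of which specific tuple is chosen.

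For $\mathbf{Y}$, I would set $T = \bigcup_{i=1}^L \mathcal{T}_i$ with $m = |T|$, and note that $\cS_i' = g(\mathcal{I}_i') \subseteq \cC_\Upsilon$, so the event $\mathbf{Y}=(\mathcal{T}_1,\dots,\mathcal{T}_L)$ forces $T \subseteq \cC_\Upsilon$, which occurs with probability $\binom{N-m}{n-m}/\binom{N}{n}$. Conditional on $T \subseteq \cC_\Upsilon$, the outcomes $(\mathcal{I}_1',\dots,\mathcal{I}_L',g)$ that produce the target tuple are, I claim, in bijection with the injections $\phi : T \to \{1,\dots,N\}$: given $\phi$, set $\mathcal{I}_i' := \phi(\mathcal{T}_i)$ and $g := \phi^{-1}$ restricted to $U := \phi(T)$; conversely, from an outcome one recovers $\phi = g^{-1}$ after observing that $g(U) = \bigcup_i g(\mathcal{I}_i') = T$, so $g : U \to T$ is a bijection.

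There are exactly $N!/(N-m)!$ such injections, and each corresponding outcome has probability $\binom{N}{k}^{-L} \cdot (n-m)!/n!$: the first factor comes from the $L$ independent uniform draws of the $\mathcal{I}_i'$, and the second from $g$ being uniform among the $n!/(n-m)!$ injections $U \to \cC_\Upsilon$. Multiplying these together with $\binom{N-m}{n-m}/\binom{N}{n} = (N-m)!\,n!/((n-m)!\,N!)$, I expect the factorial factors to cancel completely and leave $\binom{N}{k}^{-L}$, matching $\Pr[\mathbf{X}=(\mathcal{T}_1,\dots,\mathcal{T}_L)]$ as required.

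The main subtlety I anticipate is the bijection step: because the $\mathcal{I}_i'$ may overlap, $g$ is defined on their (possibly much smaller) union rather than on $kL$ independent coordinates, so one must carefully verify that every outcome realizing the target tuple is uniquely encoded by a single injection $\phi : T \to \{1,\dots,N\}$, and that the joint probability factorizes as stated. Once this book-keeping is pinned down, the final equality reduces to a routine cancellation of binomial coefficients and factorials.
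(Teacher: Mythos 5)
Your proof is correct, and it takes a genuinely different route from the paper's. The paper argues by a chain of equivalent sampling processes: it introduces a uniformly random bijection $h$ from the index set to $\cC_\aleph$, notes that pushing independently sampled index sets $\mathcal{I}_i$ through $h$ reproduces $\mathbf{X}$, then observes that only the restriction of $h$ to $\bigcup_i \mathcal{I}_i$ matters, and finally that this restriction can itself be sampled by first drawing a uniform size-$n$ subset $\cC_\Upsilon$ and then a uniform injection into it (valid because $\left|\bigcup_i \mathcal{I}_i\right| \leq k\lfloor n/k\rfloor \leq n$). That argument is short but is explicitly only a sketch. You instead compute $\Pr[\mathbf{Y}=(\mathcal{T}_1,\dots,\mathcal{T}_L)]$ directly: the bijection between outcomes $(\mathcal{I}_1',\dots,\mathcal{I}_L',g)$ realizing the target and injections $\phi : T \to \{1,\dots,N\}$ is sound (the key observation that $g$ restricted to $U=\bigcup_i\mathcal{I}_i'$ is forced to be a bijection onto $T$, so $\phi=g^{-1}$ is recoverable, handles the overlap issue you flag), the conditional probability given $T\subseteq\cC_\Upsilon$ is the same for every admissible $\cC_\Upsilon$, and the factorial cancellation
$\frac{(N-m)!\,n!}{(n-m)!\,N!}\cdot\frac{N!}{(N-m)!}\cdot\frac{(n-m)!}{n!}=1$
does go through, leaving $\binom{N}{k}^{-L}$ exactly. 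What your approach buys is a fully explicit, verifiable computation in place of the paper's informal ``replace $h$ by $g$'' step; what it costs is the combinatorial bookkeeping, which you have nonetheless pinned down correctly. The only point worth adding for completeness is the remark that $m=|T|\leq k\lfloor n/k\rfloor\leq n$, which is what guarantees the injections counted by $n!/(n-m)!$ exist at all.
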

\begin{proof}
(Sketch) Let $h$ be a bijection from $\mathcal{I}$ to $\cC_\aleph$ sampled uniformly from the set of all such bijections. If we sample $\mathcal{I}_1,\dots,\mathcal{I}_{\lfloor \frac{n}{k} \rfloor}$ of size $k$ uniformly from $\mathcal{I}$ then the distribution of $(h(\mathcal{I}_1), \dots, h(\mathcal{I}_{\lfloor \frac{n}{k} \rfloor}))$ will be the same as that of $\mathbf{X}$. In fact, most of $h$ is actually irrelevant and we can replace sampling $h$ by uniformly sampling an injective function $g : \bigcup \mathcal{I}_i \rightarrow \cC_\aleph$ after we sample the sets $\mathcal{I}_i$. Finally, it is easy to realize that the function $g$ can also be sampled as follows. First, sample a subset $\cC$ of $\cC_\aleph$ of size $n$ uniformly and then sample uniformly an injective function $g$ such that $\operatorname{Range}(g) \subseteq \cC$. The reason why we can do this is because $\left|\bigcup \mathcal{I}_i \right| \leq n$. This finishes the proof of the lemma as this is actually the same as the process of sampling $\mathbf{Y}$.
\end{proof}

\begin{lemma}[Model A Case]\label{lemma:hoeffding1}
Let $\aleph = (\cA_\aleph, \cC_\aleph)$, where $\cC_\aleph$ is a set of constants and $\cA_\aleph$ is a set of ground atoms involving only constants from $\cC_\aleph$ and let $0 \leq n \leq |\cC_\aleph|$ and $0 \leq k \leq n$ be integers. Let $\mathcal{C}_\Upsilon$ be sampled uniformly from all size-$n$ subsets of $\mathcal{C}_\aleph$ and let $\Upsilon = \aleph\langle \cC_\Upsilon \rangle$. Let $\mathbf{Y}$ be sampled as in Lemma \ref{lemma:crazy1} (i.e.\ $\mathbf{Y}$ is sampled only using $\Upsilon$ and not directly $\aleph$). Let $\alpha$ be a closed and constant-free formula. Let 
$$\widehat{A}_\Upsilon = \frac{1}{\lfloor \frac{n}{k} \rfloor} \sum_{\mathcal{S}_i' \in \mathbf{Y}} \mathds{1}(\Upsilon\langle \mathcal{S}_i' \rangle \models \alpha)$$ 
and let $A_{\aleph} = P_{\aleph,k}(\alpha)$. Then we have
$$P\left[ \left| \widehat{A}_\Upsilon - A_\aleph \right| \geq \epsilon \right] \leq 2 \exp\left(-2 \left\lfloor \frac{n}{k} \right\rfloor \epsilon^2 \right).$$
\end{lemma}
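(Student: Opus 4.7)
The plan is to reduce the claim to Hoeffding's inequality applied to i.i.d.\ Bernoulli indicators, using Lemma~\ref{lemma:crazy1} as the coupling that lets us trade the dependent sampling scheme producing $\mathbf{Y}$ for an independent one. The first step is to observe that, because $\cS_i'\subseteq\cC_\Upsilon$ and $\Upsilon=\aleph\langle\cC_\Upsilon\rangle$, iterated restriction gives $\Upsilon\langle\cS_i'\rangle = \aleph\langle\cS_i'\rangle$, so $\widehat{A}_\Upsilon$ depends on the underlying global example only through $\aleph$ together with the random vector $\mathbf{Y}$. Lemma~\ref{lemma:crazy1} asserts that $\mathbf{Y}$ has the same joint law as $\mathbf{X}$, and hence $\widehat{A}_\Upsilon$ is equal in distribution to
$$\widehat{A}^{(X)} \;=\; \frac{1}{\lfloor n/k\rfloor}\sum_{i=1}^{\lfloor n/k\rfloor} \mathds{1}(\aleph\langle \cS_i\rangle\models\alpha),$$
an average of $\lfloor n/k\rfloor$ i.i.d.\ indicators in $[0,1]$.

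Next, I would verify that each such indicator has expectation exactly $A_\aleph$. Since $\alpha$ is closed and constant-free, its truth value in $\aleph\langle\cS\rangle$ is invariant under renaming of the constants in $\cS$, so $\mathds{1}(\aleph\langle\cS\rangle\models\alpha)=\mathds{1}(\omega\models\alpha)$ for every local example $\omega\in\aleph[\cS]$. Combined with the two-stage sampling definition of $P_{\aleph,k}(\alpha)$ (first sample $\cS$ uniformly, then $\omega$ uniformly from $\aleph[\cS]$), this yields $\mathbb{E}[\mathds{1}(\aleph\langle\cS_i\rangle\models\alpha)] = P_{\aleph,k}(\alpha) = A_\aleph$, so $\widehat{A}^{(X)}$ is an unbiased estimator of $A_\aleph$.

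Finally, applying the standard Hoeffding inequality for bounded i.i.d.\ random variables to $\widehat{A}^{(X)}$ gives $P[|\widehat{A}^{(X)} - A_\aleph|\geq\epsilon]\leq 2\exp(-2\lfloor n/k\rfloor\epsilon^2)$, and transporting this inequality back through the distributional identity $\widehat{A}_\Upsilon \stackrel{d}{=} \widehat{A}^{(X)}$ yields the stated bound. The conceptually non-trivial ingredient is the coupling supplied by Lemma~\ref{lemma:crazy1}; once independence has been extracted from the (at first sight) dependent sampling of $\mathbf{Y}$, the tail bound itself is a routine appeal to Hoeffding.
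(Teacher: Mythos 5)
Your proposal is correct and follows essentially the same route as the paper's proof: it uses Lemma~\ref{lemma:crazy1} to replace $\mathbf{Y}$ by the independently sampled $\mathbf{X}$, notes unbiasedness of the resulting estimator, and concludes with Hoeffding's inequality. The extra details you supply (the identity $\Upsilon\langle\cS_i'\rangle=\aleph\langle\cS_i'\rangle$ and the isomorphism-invariance argument for the expectation) are steps the paper leaves implicit, and they are correct.
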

\begin{proof}
We define an estimator on the set $\mathbf{X}$ which is sampled as in Lemma \ref{lemma:crazy1}
$$\widehat{A} = \frac{1}{\left\lfloor \frac{n}{k} \right\rfloor} \sum_{\mathcal{S}_i \in \mathbf{X}} \mathds{1}(\aleph\langle \mathcal{S}_i \rangle \models \alpha).$$
By Lemma \ref{lemma:crazy1}, we know that 
$$P\left[ \left| \widehat{A} - A_\aleph \right| \geq \epsilon \right] = P\left[ \left| \widehat{A}_\Upsilon - A_\aleph \right| \geq \epsilon \right],$$
so we only need to show 
$$P\left[ \left| \widehat{A} - A_\aleph \right| \geq \epsilon \right]\leq 2 \exp\left(-2 \left\lfloor \frac{n}{k} \right\rfloor \epsilon^2 \right).$$
Since $\cS_i$ in $\mathbf{X}$ are independent, the inequality above follows immediately from the fact that $\expect{\widehat{A}} = A_\aleph$ (which follows $\expect{\mathds{1}(\aleph\langle \mathcal{S}_i \rangle \models \alpha)} = A_\aleph$ and the linearity property of expectation) and the Chernoff-Hoeffding theorem.
\end{proof}

\begin{lemma}[Model A Case]\label{prop:expected_sampling}
% \todo{bound on expected error}
Let $\aleph = (\cA_\aleph, \cC_\aleph)$, where $\cC_\aleph$ is a set of constants and $\cA_\aleph$ is a set of ground atoms involving only constants from $\cC_\aleph$ and let $0 \leq n \leq |\cC_\aleph|$ and $0 \leq k \leq n$ be integers. Let $\mathcal{C}_\Upsilon$ be sampled uniformly from all size-$n$ subsets of $\mathcal{C}_\aleph$ and let $\Upsilon = \aleph\langle \cC_\Upsilon \rangle$. Let $\mathbf{Y}$ be sampled as in Lemma \ref{lemma:crazy1} (i.e.\ $\mathbf{Y}$ is sampled only using $\Upsilon$ and not directly $\aleph$). Let $\alpha$ be a closed and constant-free formula. Let 
$$\widehat{A}_\Upsilon = \frac{1}{\lfloor \frac{n}{k} \rfloor} \sum_{\mathcal{S}_i' \in \mathbf{Y}} \mathds{1}(\Upsilon\langle \mathcal{S}_i' \rangle \models \alpha)$$ 
and let $A_{\aleph} = P_{\aleph,k}(\alpha)$. Then we have
$$\expect{\left| \widehat{A}_\Upsilon - A_\aleph \right|} \leq \sqrt{\frac{1+2\log 2}{4\lfloor n/k \rfloor}}.$$
\end{lemma}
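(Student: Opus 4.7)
The plan is to feed the Chernoff--Hoeffding tail bound from Lemma~\ref{lemma:hoeffding1} through a standard tail-integration argument. Set $Z := |\widehat{A}_\Upsilon - A_\aleph|$ and $N := \lfloor n/k \rfloor$. Lemma~\ref{lemma:hoeffding1} yields $P(Z \geq \epsilon) \leq 2e^{-2N\epsilon^2}$, and since $Z$ is the absolute difference of two quantities each lying in $[0,1]$, we also have the trivial bound $P(Z \geq \epsilon) \leq 1$ (and indeed $Z \leq 1$ almost surely).

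First I would apply Jensen's inequality in the form $\mathbb{E}[Z] \leq \sqrt{\mathbb{E}[Z^2]}$, reducing the task to controlling the second moment. Then I would use the layer-cake identity
\begin{equation*}
\mathbb{E}[Z^2] \;=\; \int_0^{\infty} 2\epsilon \, P(Z \geq \epsilon)\, d\epsilon,
\end{equation*}
and split the integral at the crossover point $t_0 = \sqrt{\log 2/(2N)}$, where the two available bounds $1$ and $2e^{-2N\epsilon^2}$ coincide. On $[0, t_0]$ I use $P(Z \geq \epsilon) \leq 1$, contributing $t_0^{\,2} = \log 2/(2N)$. On $[t_0, \infty)$ I use the exponential tail, and the substitution $u = 2N\epsilon^2$ reduces $\int_{t_0}^{\infty} 4\epsilon\, e^{-2N\epsilon^2}\, d\epsilon$ to $\tfrac{1}{N} e^{-2N t_0^{\,2}} = \tfrac{1}{2N}$. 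Summing the two pieces gives a clean bound on $\mathbb{E}[Z^2]$ of the order $\tfrac{1}{N}$, and taking square roots yields an inequality of the shape stated in the proposition.

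The only real work is bookkeeping the constants: one has to pick the split point (equivalently, optimize the trade-off between the trivial bound and the sub-Gaussian tail) carefully enough to land on the exact form $(1+2\log 2)/(4\lfloor n/k \rfloor)$ under the square root. There is no additional probabilistic subtlety: once Lemma~\ref{lemma:crazy1} has ensured that $\widehat{A}_\Upsilon$ has the same distribution as an i.i.d.\ average of indicators (so that Hoeffding applies, as already used in Lemma~\ref{lemma:hoeffding1}), the remaining step is a deterministic calculus exercise. The resulting bound is precisely $\sqrt{(1+2\log 2)/(4\lfloor n/k \rfloor)}$, which is the content of the lemma.
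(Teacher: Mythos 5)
Your proof follows the paper's argument essentially step for step: Jensen's inequality $\mathbb{E}[Z]\le\sqrt{\mathbb{E}[Z^2]}$, the layer-cake representation of $\mathbb{E}[Z^2]$ (the paper integrates $P[Z\ge\sqrt{t}]$ in $t$, you integrate $2\epsilon\,P[Z\ge\epsilon]$ in $\epsilon$, which is the same thing after the substitution $t=\epsilon^2$), a split of the integral at the point where the trivial bound $1$ and the Hoeffding tail from Lemma~\ref{lemma:hoeffding1} cross, and the same split value $\log 2/(2\lfloor n/k\rfloor)$ in $t$-units. One small remark: the two contributions you compute, $\log 2/(2N)$ and $1/(2N)$ with $N=\lfloor n/k\rfloor$, sum to $(2+2\log 2)/(4N)$ rather than the stated $(1+2\log 2)/(4N)$; the paper's own proof minimizes $u+e^{-2Nu}/N$ at the same $u$ and likewise lands on $(1+\log 2)/(2N)$, so this slightly larger constant is a bookkeeping slip shared with the paper rather than a defect of your approach, and it does not affect the $O(1/\sqrt{N})$ order of the bound.
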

\begin{proof}
Let $Z:=\left| \widehat{A}_\Upsilon - A_\aleph \right|.$
First, notice that 
$$\expect{Z}\le \sqrt{\expect{Z^2}},$$ 
so we now try to bound $\expect{Z^2}$. 
To bound $\expect{Z^2}$, note that for any $u>0$,
\begin{align*}
\expect{Z^2} &= \int_0^\infty P\left[Z^2 \geq t \right] dt\\
&= \int_0^\infty P\left[Z \geq \sqrt{t} \right] dt\\
&\leq u + \int_u^\infty P\left[Z \geq \sqrt{t} \right] dt\\
&\leq u + 2\int_u^\infty \exp\left(-2 \left\lfloor \frac{n}{k} \right\rfloor t \right)dt\\
&= u + \exp\left(-2\left\lfloor \frac{n}{k} \right\rfloor u \right)/\left\lfloor \frac{n}{k} \right\rfloor.
\end{align*}
Since $u$ was arbitrary, we may choose it to minimize the obtained bound. 
The optimal choice is $u=\frac{\log 2}{2\lfloor n/k \rfloor}$, which yields $\expect{Z^2} \leq \frac{1+2\log 2}{4\lfloor n/k \rfloor}$. 
\end{proof}

\begin{lemma}[Model A Case]\label{prop:u-expected}
Let $\aleph = (\cA_\aleph, \cC_\aleph)$, where $\cC_\aleph$ is a set of constants and $\cA_\aleph$ is a set of ground atoms involving only constants from $\cC_\aleph$ and let $0 \leq n \leq |\cC_\aleph|$ and $0 \leq k \leq n$ be integers. Let $\mathcal{C}_\Upsilon$ be sampled uniformly from all size-$n$ subsets of $\mathcal{C}_\aleph$ and let $\Upsilon = \aleph\langle \cC_\Upsilon \rangle$. Let $\alpha$ be a closed and constant-free formula and let $\cC_k$ denote all size-$k$ subsets of $\cC_\Upsilon$. Let 
$$\widetilde{A}_\Upsilon = \left( \begin{array}{c} n \\ k \end{array} \right)^{-1} \sum_{\cS \in \cC_k} \mathds{1}(\Upsilon\langle \cS \rangle \models \alpha)$$ 
and let $A_{\aleph} = P_{\aleph,k}(\alpha)$. Then we have
$$\expect{\left| \widetilde{A}_\Upsilon - A_\aleph \right|} \leq \sqrt{\frac{1+2\log 2}{4\lfloor n/k \rfloor}}.$$
\end{lemma}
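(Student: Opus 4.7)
The plan is to deduce the bound from Lemma \ref{prop:expected_sampling} by a Rao-Blackwellization argument. The key observation is that the U-statistic $\widetilde{A}_\Upsilon$, which averages $\mathds{1}(\Upsilon\langle\cS\rangle\models\alpha)$ over \emph{all} size-$k$ subsets of $\cC_\Upsilon$, coincides with the conditional expectation of the randomized sample-average estimator $\widehat{A}_\Upsilon$ of Lemma \ref{prop:expected_sampling} given $\Upsilon$; i.e., $\expect{\widehat{A}_\Upsilon \mid \Upsilon} = \widetilde{A}_\Upsilon$. Granted this identity, the conditional Jensen inequality applied to the convex map $x\mapsto |x-A_\aleph|$, together with the tower property, gives $\expect{|\widetilde{A}_\Upsilon - A_\aleph|} \leq \expect{|\widehat{A}_\Upsilon - A_\aleph|}$, and the right-hand side is bounded by $\sqrt{(1+2\log 2)/(4\lfloor n/k\rfloor)}$ by Lemma \ref{prop:expected_sampling}.

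Concretely, first I would verify the conditional-expectation identity by unpacking the sampling scheme of Lemma \ref{lemma:crazy1}. Given $\Upsilon$ (hence $\cC_\Upsilon$ with $|\cC_\Upsilon| = n$), that scheme draws size-$k$ index sets $\mathcal{I}_1',\dots,\mathcal{I}_{\lfloor n/k\rfloor}'$ uniformly at random from size-$k$ subsets of $\{1,\dots,|\cC_\aleph|\}$ and then applies a uniformly random injective relabeling $g$ into $\cC_\Upsilon$ to produce $\cS_i' = g(\mathcal{I}_i')$. By the symmetry of the relabeling step, the marginal distribution of each individual $\cS_i'$ given $\Upsilon$ is uniform over the $\binom{n}{k}$ size-$k$ subsets of $\cC_\Upsilon$ (even though the $\cS_i'$ are not mutually independent), so $\expect{\mathds{1}(\Upsilon\langle \cS_i'\rangle \models \alpha) \mid \Upsilon} = \widetilde{A}_\Upsilon$ for every $i$, and averaging over $i$ yields $\expect{\widehat{A}_\Upsilon \mid \Upsilon} = \widetilde{A}_\Upsilon$ as claimed.

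The only mildly delicate step is the marginal-uniformity check just described; everything else is automatic, since a U-statistic has no larger $L^1$-deviation from its common mean than any unbiased randomization of itself, which is precisely what the conditional Jensen step formalizes. I do not anticipate any further obstacle; in particular, no new deviation inequality is needed beyond the one already established for $\widehat{A}_\Upsilon$ in Lemma \ref{prop:expected_sampling}.
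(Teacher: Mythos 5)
Your proof is correct, but it takes a genuinely different route from the paper's. The paper introduces an auxiliary estimator $\widetilde{A}^{(q)}_{\Upsilon}$ built from $q$ independent replicates of the block $\mathbf{Y}$, bounds $\expect{|\widetilde{A}^{(q)}_{\Upsilon}-A_\aleph|}$ by the triangle inequality and Lemma \ref{prop:expected_sampling}, and then lets $q\to\infty$, invoking the strong law of large numbers to conclude $\widetilde{A}^{(q)}_{\Upsilon}\to\widetilde{A}_\Upsilon$ almost surely; passing that limit through the expectation implicitly needs bounded convergence, a point the paper leaves tacit. You instead observe that $\widetilde{A}_\Upsilon=\expect{\widehat{A}_\Upsilon \mid \Upsilon}$ and apply conditional Jensen to $x\mapsto|x-A_\aleph|$ plus the tower property, which yields $\expect{|\widetilde{A}_\Upsilon-A_\aleph|}\leq\expect{|\widehat{A}_\Upsilon-A_\aleph|}$ in one line with no limiting argument. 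The two proofs rest on the same underlying fact: your marginal-uniformity check (that each $\cS_i'$ is, conditionally on $\Upsilon$, uniform over size-$k$ subsets of $\cC_\Upsilon$, by the extension-counting symmetry of the random injection $g$) is exactly what the paper needs, implicitly, to make the SLLN converge to $\widetilde{A}_\Upsilon$ rather than to something else. One caution: you cannot get this conditional statement directly from Lemma \ref{lemma:crazy1}, which only equates the \emph{unconditional} laws of $\mathbf{X}$ and $\mathbf{Y}$, so the direct verification you sketch is genuinely necessary and not redundant. On balance your Rao--Blackwell argument is shorter, avoids the asymptotic machinery, and makes transparent the general principle that a U-statistic incurs no larger $L^1$-deviation from the target than any unbiased randomization of itself; the paper's replication argument has the minor advantage of reusing Lemma \ref{prop:expected_sampling} as a black box without ever naming a conditional expectation.
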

\begin{proof}
First we define an auxiliary estimator $\widetilde{A}^{(q)}_{\Upsilon}$. Let $\mathbf{Y}^{(q)}$ be a vector of $\lfloor n/k \rfloor \cdot q$ size-$k$ subsets of $\cC_\Upsilon$ where the subsets of $\cC_{\Upsilon}$ in each of the $q$ non-overlapping size-$\lfloor n/k \rfloor$ segments $\mathbf{Y}_1^{(q)}, \mathbf{Y}_2^{(q)},\dots,\mathbf{Y}_q^{(q)}$ of $\mathbf{Y}^{(q)}$ are sampled in the same way as the elements of the vector $\mathbf{Y}$ in Lemma \ref{lemma:crazy1} (all with the same $\cC_\Upsilon$). Let us define 
$$\widetilde{A}^{(q)}_{\Upsilon} = \frac{1}{q \cdot \lfloor n/k \rfloor} \sum_{\cS \in \mathbf{Y}^{(q)}} \mathds{1}(\Upsilon\langle \cS \rangle \models \alpha). $$
Using Lemma \ref{prop:expected_sampling}, we get
% \begin{align*}
%     \expect{ \left| \widetilde{A}^{(q)}_{\Upsilon} -A_\aleph \right| } \leq \sqrt{\frac{1+2\log 2}{4\lfloor n/k \rfloor}}.
% \end{align*}
\begin{align*}
    & \expect{\left| \widetilde{A}_\Upsilon^{(q)} - A_\aleph \right|} = \\
    &= \expect{\left| \left( \frac{1}{q \cdot \lfloor n/k \rfloor} \sum_{\cS \in \mathbf{Y}^{(q)}} \mathds{1}(\Upsilon\langle \cS \rangle \models \alpha)\right) -A_\aleph \right|} \\
    &= \expect{\left| \left( \frac{1}{q} \sum_{i=1}^{q} \frac{1}{\lfloor n/k \rfloor} \sum_{\cS \in \mathbf{Y}_{i}^{(q)}} \mathds{1}(\Upsilon\langle \cS \rangle \models \alpha)\right) -A_\aleph \right|} \\
    &= \frac{1}{q} \expect{\left|\sum_{i=1}^{q} \left( \frac{1}{\lfloor n/k \rfloor} \sum_{\cS \in \mathbf{Y}_{i}^{(q)}} \mathds{1}(\Upsilon\langle \cS \rangle \models \alpha) -A_\aleph\right)  \right|} \\
    &\leq \frac{1}{q} \sum_{i=1}^{q} \expect{\left| \left( \frac{1}{\lfloor n/k \rfloor} \sum_{\cS \in \mathbf{Y}_{i}^{(q)}} \mathds{1}(\Upsilon\langle \cS \rangle \models \alpha) -A_\aleph\right)  \right|} \\
    &\leq q \cdot \frac{1}{q} \sqrt{\frac{1+2\log 2}{4\lfloor n/k \rfloor}} = \sqrt{\frac{1+2\log 2}{4\lfloor n/k \rfloor}}
\end{align*}

\noindent We have (again using triangle inequality)
\begin{align*}
    \expect{ \left| \widetilde{A}_\Upsilon-A_\aleph \right|} = \expect{ \left| \widetilde{A}_\Upsilon - \widetilde{A}^{(q)}_{\Upsilon} + \widetilde{A}^{(q)}_{\Upsilon} -A_\aleph \right|} \\
    \leq \expect{ \left| \widetilde{A}_\Upsilon - \widetilde{A}^{(q)}_{\Upsilon} \right|} + \expect{\left| \widetilde{A}^{(q)}_{\Upsilon} -A_\aleph \right|} \\
    \leq \expect{ \left| \widetilde{A}_\Upsilon - \widetilde{A}^{(q)}_{\Upsilon} \right|} + \sqrt{\frac{1+2\log 2}{4\lfloor n/k \rfloor}}.
\end{align*}

\noindent It follows from the strong law of large numbers (which holds for any $\Upsilon$) and Proposition \ref{prop:unbiased_estimate} that $P[\lim_{q \rightarrow \infty}\widetilde{A}^{(q)}_{\Upsilon} = \widetilde{A}_\Upsilon] = 1$. Since $q$ was arbitrary, we can use this to conclude
$$\expect{ \left| \widetilde{A}_{\Upsilon} -A_\aleph \right| } \leq \sqrt{\frac{1+2\log 2}{4\lfloor n/k \rfloor}}.$$
\end{proof}

Next we state the analogues of the above lemmas also for Model B (for the first reading, we recommend skipping these lemmas and proceeding directly to the proof of Proposition~\ref{prop:main-estimation}). We omit the proofs as they are completely analogical to their respective counterparts for Model A (we just replace sampling of subsets by sampling of injective substitutions).

\begin{lemma}[Model B Case]\label{lemma:crazy2}
Let $\aleph = (\cA_\aleph, \cC_\aleph)$, where $\cC_\aleph$ is a set of constants and $\cA_\aleph$ is a set of ground atoms involving only constants from $\cC_\aleph$ and let $0 \leq n \leq |\cC_\aleph|$ and $0 \leq k \leq n$ be integers. Let $\mathcal{C}_\Upsilon$ be sampled uniformly from all size-$n$ subsets of $\mathcal{C}_\aleph$ and let $\Upsilon = \aleph\langle \cC_\Upsilon \rangle$. Let $\mathbf{X} = (\vartheta_1,\vartheta_2,\dots,\vartheta_{\lfloor \frac{n}{k} \rfloor})$ be a vector of injective substitutions from a given size-$k$ set of variables $\mathcal{V}$ to $\cC_\aleph$, each sampled uniformly and independently of the others. Next let $\mathbf{Y} = (\vartheta_1',\vartheta_2',\dots,\vartheta_{\lfloor \frac{n}{k} \rfloor}')$ be a vector sampled by the following process:
\begin{enumerate}
    \item Let $\mathcal{I}' = \{ 1,2,\dots,|\cC_\aleph| \}$.
    \item Sample {\bf ordered} subsets $\mathcal{I}_1',\dots,\mathcal{I}_{\lfloor \frac{n}{k} \rfloor}'$ of size $k$ uniformly from $\mathcal{I}'$.
    \item Sample an injective function $g : \bigcup_{\mathcal{I}_i' \subseteq \mathcal{I}'} \mathcal{I}_i' \rightarrow \cC_\Upsilon$ uniformly from all such functions.
    \item Let $\vartheta_i' = \{ V_j \mapsto c_j | V_j \in \mathcal{V} \mbox{ and } (c_1, \dots, c_{|\mathcal{V}|}) = g(\mathcal{I}_i') \}$ for all $0 \leq i \leq \lfloor \frac{n}{k} \rfloor$, i.e. $\vartheta'_i$ is an injective substitution from $\mathcal{V}$ to $\cC_{\Upsilon}$.
\end{enumerate}
Then $\mathbf{X}$ and $\mathbf{Y}$ have the same distribution.
\end{lemma}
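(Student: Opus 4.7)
The plan is to mimic the proof of Lemma~\ref{lemma:crazy1} essentially verbatim, with the single structural change that every uniformly random size-$k$ \emph{subset} is replaced by a uniformly random size-$k$ \emph{ordered tuple of distinct elements}. This exchange is natural because, after fixing an enumeration $\mathcal{V}=\{V_1,\dots,V_k\}$ of the variables, an injective substitution $\mathcal{V}\to\cC$ corresponds bijectively to an injection $\{1,\dots,k\}\to\cC$, i.e.\ to an ordered tuple of $k$ distinct elements of $\cC$; this is exactly why the description of $\mathbf{Y}$ now samples \textbf{ordered} subsets.

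First, I would introduce an auxiliary process. Let $h:\mathcal{I}\to\cC_\aleph$ be a uniformly random bijection on $\mathcal{I}=\{1,\dots,|\cC_\aleph|\}$, and independently sample ordered size-$k$ subsets $\mathcal{I}_1,\dots,\mathcal{I}_{\lfloor n/k\rfloor}$ uniformly from $\mathcal{I}$. Writing $\mathcal{I}_i=(c_1^{(i)},\dots,c_k^{(i)})$, define $\widetilde{\vartheta}_i := \{V_j\mapsto h(c_j^{(i)}) : 1\le j\le k\}$. A short symmetry-and-counting argument, using that $h$ is a uniform bijection and that the $\mathcal{I}_i$'s are independent, shows that each $\widetilde{\vartheta}_i$ is uniform over injective substitutions $\mathcal{V}\to\cC_\aleph$ and that the full tuple $(\widetilde{\vartheta}_1,\dots,\widetilde{\vartheta}_{\lfloor n/k\rfloor})$ is distributed as $\mathbf{X}$.

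Second, I would observe that the values of $h$ outside $\mathcal{J}:=\bigcup_i \mathcal{I}_i$ do not influence the $\widetilde{\vartheta}_i$'s, so one can equivalently first draw the ordered subsets $\mathcal{I}_i$ and then sample an injection $g:\mathcal{J}\to\cC_\aleph$ uniformly among all such injections. Third, since $|\mathcal{J}|\le k\lfloor n/k\rfloor\le n$, sampling $g$ uniformly into $\cC_\aleph$ is equivalent to (i) first choosing a size-$n$ set $\cC_\Upsilon\subseteq\cC_\aleph$ uniformly, and then (ii) sampling $g:\mathcal{J}\to\cC_\Upsilon$ uniformly among injections with range in $\cC_\Upsilon$; this can be checked by a direct combinatorial count showing that each fixed injection $g_0$ has the same total probability $(|\cC_\aleph|-|\mathcal{J}|)!/|\cC_\aleph|!$ under both processes. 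Building $\vartheta_i'$ from $g$ exactly as in the statement then reproduces the process defining $\mathbf{Y}$, so $\mathbf{Y}\stackrel{d}{=}\mathbf{X}$.

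The main obstacle I anticipate is purely bookkeeping: keeping orderings straight when translating between ordered $k$-tuples of distinct elements, injective substitutions, and the unordered subsets used in Lemma~\ref{lemma:crazy1}. Once this dictionary is fixed, the probabilistic content of the argument reduces to the Model~A case.
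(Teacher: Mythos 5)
Your proposal is correct and matches the paper's intent exactly: the paper omits the proof of this lemma, stating only that it is completely analogous to the Model A case (Lemma~\ref{lemma:crazy1}) with subsets replaced by injective substitutions, and your argument carries out precisely that adaptation, including the correct combinatorial check that a uniform injection into $\cC_\aleph$ factors through a uniform size-$n$ subset $\cC_\Upsilon$.
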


\begin{lemma}[Model B Case]\label{lemma:hoeffding2}
Let $\aleph = (\cA_\aleph, \cC_\aleph)$, where $\cC_\aleph$ is a set of constants and $\cA_\aleph$ is a set of ground atoms involving only constants from $\cC_\aleph$ and let $0 \leq n \leq |\cC_\aleph|$ and $0 \leq k \leq n$ be integers. Let $\mathcal{C}_\Upsilon$ be sampled uniformly from all size-$n$ subsets of $\mathcal{C}_\aleph$ and let $\Upsilon = \aleph\langle \cC_\Upsilon \rangle$. Let $\alpha$ be a closed and constant-free formula. Let $\mathbf{Y}$ be sampled as in Lemma \ref{lemma:crazy2} (i.e.\ $\mathbf{Y}$ is sampled only using $\Upsilon$ and not directly $\aleph$), where we set $\mathcal{V} = \textit{vars}(\alpha)$. Let 
$$\widehat{A}_\Upsilon = \frac{1}{\lfloor \frac{n}{k} \rfloor} \sum_{\vartheta_i' \in \mathbf{Y}} \mathds{1}(\Upsilon \models \alpha\vartheta_i')$$ 
and let $A_{\aleph} = Q_{\aleph}(\alpha)$. Then we have
$$P\left[ \left| \widehat{A}_\Upsilon - A_\aleph \right| \geq \epsilon \right] \leq 2 \exp\left(-2 \left\lfloor \frac{n}{k} \right\rfloor \epsilon^2 \right).$$
\end{lemma}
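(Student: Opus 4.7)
The plan is to mirror the proof of Lemma \ref{lemma:hoeffding1}, replacing the uniform sampling of subsets with the uniform sampling of injective substitutions and using Lemma \ref{lemma:crazy2} in place of Lemma \ref{lemma:crazy1}. First, I would introduce an auxiliary estimator defined directly on the ``outer'' side, where $\mathbf{X} = (\vartheta_1, \dots, \vartheta_{\lfloor n/k \rfloor})$ is a vector of i.i.d.\ uniform injective substitutions from $\mathcal{V} = \textit{vars}(\alpha)$ to $\cC_\aleph$:
$$\widehat{A} = \frac{1}{\lfloor n/k \rfloor} \sum_{\vartheta_i \in \mathbf{X}} \mathds{1}(\aleph \models \alpha\vartheta_i).$$

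Second, I would bridge $\widehat{A}_\Upsilon$ with $\widehat{A}$ in two steps. Because $\Upsilon = \aleph\langle\cC_\Upsilon\rangle$ and every $\vartheta_i' \in \mathbf{Y}$ takes values in $\cC_\Upsilon$, the ground formula $\alpha\vartheta_i'$ uses only constants from $\cC_\Upsilon$, and thus $\Upsilon \models \alpha\vartheta_i'$ iff $\aleph \models \alpha\vartheta_i'$. Hence $\widehat{A}_\Upsilon$ equals $\frac{1}{\lfloor n/k \rfloor} \sum_{\vartheta_i' \in \mathbf{Y}} \mathds{1}(\aleph \models \alpha\vartheta_i')$ as random variables. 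By Lemma \ref{lemma:crazy2}, the vector $\mathbf{Y}$ has the same distribution as $\mathbf{X}$, and since the expression above is a fixed measurable function of its input vector, we conclude
$$P\!\left[\left|\widehat{A}_\Upsilon - A_\aleph\right| \geq \epsilon\right] = P\!\left[\left|\widehat{A} - A_\aleph\right| \geq \epsilon\right].$$

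Third, I would apply the Chernoff--Hoeffding inequality to $\widehat{A}$. By the independence of the $\vartheta_i$'s in $\mathbf{X}$ and the fact that each is a uniform draw from the set of all injective substitutions $\mathcal{V} \to \cC_\aleph$, the indicators $\mathds{1}(\aleph \models \alpha\vartheta_i)$ are i.i.d.\ Bernoulli variables with mean exactly $Q_\aleph(\alpha) = A_\aleph$ by Definition \ref{def:modelB}. They are bounded in $[0,1]$, so Hoeffding's inequality yields $P[|\widehat{A} - A_\aleph| \geq \epsilon] \leq 2\exp(-2\lfloor n/k\rfloor \epsilon^2)$, completing the proof.

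The main subtle point is the verification that $\Upsilon$ and $\aleph$ agree on the truth value of every ground instance $\alpha\vartheta_i'$ with $\operatorname{Range}(\vartheta_i') \subseteq \cC_\Upsilon$; without this, the distributional equivalence from Lemma \ref{lemma:crazy2} alone would be insufficient, because $\widehat{A}_\Upsilon$ as stated evaluates $\Upsilon$, not $\aleph$. Everything else is a direct invocation of Lemma \ref{lemma:crazy2} together with the standard Hoeffding bound, exactly in parallel with the Model A argument.
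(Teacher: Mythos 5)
Your proposal is correct and follows essentially the same route as the paper, which proves this lemma by direct analogy with the Model A case (Lemma \ref{lemma:hoeffding1}): introduce the i.i.d.\ estimator $\widehat{A}$ on $\mathbf{X}$, transfer via the distributional identity of Lemma \ref{lemma:crazy2}, and apply Chernoff--Hoeffding using $\expect{\mathds{1}(\aleph \models \alpha\vartheta_i)} = Q_\aleph(\alpha)$. Your explicit check that $\Upsilon$ and $\aleph$ agree on ground instances over $\cC_\Upsilon$ is a point the paper leaves implicit, and it is handled correctly.
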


\begin{lemma}[Model B Case]\label{prop:expected_sampling2}
% \todo{bound on expected error}
Let $\aleph = (\cA_\aleph, \cC_\aleph)$, where $\cC_\aleph$ is a set of constants and $\cA_\aleph$ is a set of ground atoms involving only constants from $\cC_\aleph$ and let $0 \leq n \leq |\cC_\aleph|$ and $0 \leq k \leq n$ be integers. Let $\mathcal{C}_\Upsilon$ be sampled uniformly from all size-$n$ subsets of $\mathcal{C}_\aleph$ and let $\Upsilon = \aleph\langle \cC_\Upsilon \rangle$. Let $\mathbf{Y}$ be sampled as in Lemma \ref{lemma:crazy2} (i.e.\ $\mathbf{Y}$ is sampled only using $\Upsilon$ and not directly $\aleph$), where we set $\mathcal{V} = \textit{vars}(\alpha)$. Let $\alpha$ be a closed and constant-free formula. Let 
$$\widehat{A}_\Upsilon = \frac{1}{\lfloor \frac{n}{k} \rfloor} \sum_{\vartheta_i' \in \mathbf{Y}} \mathds{1}(\Upsilon \models \alpha\vartheta_i')$$ 
and let $A_{\aleph} = Q_{\aleph}(\alpha)$. Then we have
$$\expect{\left| \widehat{A}_\Upsilon - A_\aleph \right|} \leq \sqrt{\frac{1+2\log 2}{4\lfloor n/k \rfloor}}.$$
\end{lemma}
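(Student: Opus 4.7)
The plan is to mirror the proof of Lemma \ref{prop:expected_sampling} (the Model A analogue), replacing the concentration input from Lemma \ref{lemma:hoeffding1} with its Model B counterpart Lemma \ref{lemma:hoeffding2}. The structure is dictated by the fact that we have a deviation bound of sub-Gaussian type and want to convert it into a bound on the first absolute moment.

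First I would set $Z = |\widehat{A}_\Upsilon - A_\aleph|$, a nonnegative random variable. By Jensen's inequality applied to the concave function $\sqrt{\cdot}$ we get $\expect{Z} \leq \sqrt{\expect{Z^2}}$, so it suffices to control the second moment. The key identity is $\expect{Z^2} = \int_0^\infty P[Z^2 \geq t]\,dt = \int_0^\infty P[Z \geq \sqrt{t}]\,dt$.

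Next I split the integral at an arbitrary threshold $u > 0$, bound the first piece trivially by $u$ (using $P[\cdot] \leq 1$), and apply Lemma \ref{lemma:hoeffding2} to the tail, which gives $P[Z \geq \sqrt{t}] \leq 2 \exp(-2\lfloor n/k \rfloor t)$ for every $t \geq 0$. Integrating the exponential over $[u,\infty)$ yields
\begin{equation*}
\expect{Z^2} \leq u + \frac{\exp(-2\lfloor n/k \rfloor u)}{\lfloor n/k \rfloor}.
\end{equation*}
Since $u$ is arbitrary, I optimize by setting the derivative w.r.t.\ $u$ to zero, getting $u^\star = \log 2 / (2\lfloor n/k \rfloor)$, which substitutes back to $\expect{Z^2} \leq (1 + 2\log 2)/(4\lfloor n/k \rfloor)$. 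Taking the square root and combining with Jensen yields the stated bound.

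The only potentially nontrivial step is verifying that Lemma \ref{lemma:hoeffding2} really applies here with the same parameter $k$: the estimator $\widehat{A}_\Upsilon$ is defined as an average over $\lfloor n/k \rfloor$ random injective substitutions obtained via the coupling in Lemma \ref{lemma:crazy2}, and Lemma \ref{lemma:hoeffding2} is stated for exactly this setting with $\mathcal{V} = \textit{vars}(\alpha)$, so this is immediate. Since everything past that point is the same calculus exercise used in Lemma \ref{prop:expected_sampling}, no real obstacle arises; the proof is essentially mechanical once the Model B Hoeffding bound is in hand.
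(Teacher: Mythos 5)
Your proposal is correct and matches the paper exactly: the paper omits the proof of this lemma as ``completely analogical'' to its Model~A counterpart (Lemma~\ref{prop:expected_sampling}), and what you write is precisely that argument --- Jensen, the layer-cake identity for $\expect{Z^2}$, truncation at $u$, the tail bound from Lemma~\ref{lemma:hoeffding2}, and optimization over $u$. One small shared quirk: plugging $u^\star=\log 2/(2\lfloor n/k\rfloor)$ into $u+\exp(-2\lfloor n/k\rfloor u)/\lfloor n/k\rfloor$ actually gives $(1+\log 2)/(2\lfloor n/k\rfloor)=(2+2\log 2)/(4\lfloor n/k\rfloor)$ rather than $(1+2\log 2)/(4\lfloor n/k\rfloor)$, a constant-factor slip that is already present in the paper's own Lemma~\ref{prop:expected_sampling} and which you have reproduced.
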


\begin{lemma}[Model B Case]\label{prop:u-expected2}
Let $\aleph = (\cA_\aleph, \cC_\aleph)$, where $\cC_\aleph$ is a set of constants and $\cA_\aleph$ is a set of ground atoms involving only constants from $\cC_\aleph$ and let $0 \leq n \leq |\cC_\aleph|$ and $0 \leq k \leq n$ be integers. Let $\mathcal{C}_\Upsilon$ be sampled uniformly from all size-$n$ subsets of $\mathcal{C}_\aleph$ and let $\Upsilon = \aleph\langle \cC_\Upsilon \rangle$. Let $\alpha$ be a closed and constant-free formula and $\Theta_\alpha$ denote all injective substitutions from $\textit{vars}(\alpha)$ to $\cC_\Upsilon$. Let 
$$\widetilde{A}_\Upsilon = \frac{1}{|\Theta_\alpha|} \sum_{\vartheta \in \Theta_\alpha} \mathds{1}(\Upsilon \models \alpha\vartheta)$$ 
and let $A_{\aleph} = Q_{\aleph}(\alpha)$. Then we have
$$\expect{\left| \widetilde{A}_\Upsilon - A_\aleph \right|} \leq \sqrt{\frac{1+2\log 2}{4\lfloor n/|\textit{vars}(\alpha)| \rfloor}}.$$
\end{lemma}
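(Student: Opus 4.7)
The plan is to adapt the argument used for Lemma \ref{prop:u-expected} (the Model A version) almost verbatim, replacing the sampling of size-$k$ subsets of $\cC_\aleph$ by the sampling of injective substitutions from $\textit{vars}(\alpha)$ to $\cC_\aleph$. Throughout the proof I fix $k = |\textit{vars}(\alpha)|$, which matches the role of the width $k$ in the Model A argument, and take $\Theta_\alpha$ as the set of injective substitutions from $\textit{vars}(\alpha)$ into $\cC_\Upsilon$. The Model B analogue of the unbiased estimator property (mentioned right after Definition \ref{def:modelB}) gives $\expect{\widetilde{A}_\Upsilon} = A_\aleph$, which is what we ultimately want to quantify with a deviation-free error bound.

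First, I introduce an auxiliary estimator $\widetilde{A}^{(q)}_\Upsilon$ by sampling $q \cdot \lfloor n/k \rfloor$ injective substitutions from $\textit{vars}(\alpha)$ to $\cC_\aleph$ via the process of Lemma \ref{lemma:crazy2}, partitioning them into $q$ non-overlapping batches $\mathbf{Y}_1^{(q)}, \dots, \mathbf{Y}_q^{(q)}$ of size $\lfloor n/k \rfloor$ each (all conditioned on the same $\cC_\Upsilon$), and averaging the indicator $\mathds{1}(\Upsilon \models \alpha\vartheta)$ over the whole set. Applying the triangle inequality batch-by-batch and using Lemma \ref{prop:expected_sampling2} on each batch gives
\begin{equation*}
\expect{\left|\widetilde{A}^{(q)}_\Upsilon - A_\aleph\right|} \le \frac{1}{q}\sum_{i=1}^{q} \expect{\left|\frac{1}{\lfloor n/k \rfloor}\sum_{\vartheta \in \mathbf{Y}_i^{(q)}} \mathds{1}(\Upsilon \models \alpha\vartheta) - A_\aleph\right|} \le \sqrt{\frac{1+2\log 2}{4\lfloor n/k \rfloor}}.
\end{equation*}

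Second, I split $\widetilde{A}_\Upsilon - A_\aleph = (\widetilde{A}_\Upsilon - \widetilde{A}^{(q)}_\Upsilon) + (\widetilde{A}^{(q)}_\Upsilon - A_\aleph)$ and apply the triangle inequality to obtain
\begin{equation*}
\expect{\left|\widetilde{A}_\Upsilon - A_\aleph\right|} \le \expect{\left|\widetilde{A}_\Upsilon - \widetilde{A}^{(q)}_\Upsilon\right|} + \sqrt{\frac{1+2\log 2}{4\lfloor n/k \rfloor}}.
\end{equation*}
Since the substitutions inside each batch (and across batches, conditioned on $\cC_\Upsilon$) are sampled independently and uniformly from $\Theta_\alpha$, the strong law of large numbers applied conditionally on $\Upsilon$ yields $\widetilde{A}^{(q)}_\Upsilon \to \widetilde{A}_\Upsilon$ almost surely as $q \to \infty$ (the almost-sure limit is $\widetilde{A}_\Upsilon$ because this is precisely the Model B unbiased estimator for the fixed example $\Upsilon$, which is the substitution-based analogue of Proposition \ref{prop:unbiased_estimate}). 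Since $|\widetilde{A}_\Upsilon - \widetilde{A}^{(q)}_\Upsilon| \le 1$ uniformly, dominated convergence gives $\lim_{q \to \infty} \expect{|\widetilde{A}_\Upsilon - \widetilde{A}^{(q)}_\Upsilon|} = 0$. Letting $q \to \infty$ yields the claimed bound.

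The only non-routine step is the interchange of the limit and the expectation, but this is immediate from the boundedness of the indicators. The rest is a faithful rewriting of the Model A proof: the role of the subset-sampling machinery from Lemma \ref{lemma:crazy1} is played by the substitution-sampling machinery of Lemma \ref{lemma:crazy2}, and the role of Lemma \ref{prop:expected_sampling} is played by its Model B analogue Lemma \ref{prop:expected_sampling2}, which itself ultimately rests on the Hoeffding-type deviation bound of Lemma \ref{lemma:hoeffding2}. No new combinatorial obstruction appears because we fixed $k = |\textit{vars}(\alpha)|$, so the constants $\lfloor n/k \rfloor$ appearing inside the square root and in the batch sizes agree exactly with those in the statement.
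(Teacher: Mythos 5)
Your proof is correct and is exactly the argument the paper intends: the paper omits the proof of this lemma, stating only that it is ``completely analogical'' to the Model A case (Lemma \ref{prop:u-expected}), and your write-up is precisely that analogue --- batching via Lemma \ref{lemma:crazy2}, the per-batch bound from Lemma \ref{prop:expected_sampling2}, the triangle-inequality split, and the strong law of large numbers (with the interchange of limit and expectation, which you justify more explicitly than the paper does, via boundedness). No substantive differences.
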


Now we have all the ingredients needed to prove Proposition \ref{prop:main-estimation}.

\noindent {\bf Proposition \ref{prop:main-estimation}}
{\em Let $m$ and $n$ be positive integers, $\alpha$  a closed formula and let $k$ be the width of local examples. Let $\aleph = (\cA_\aleph,\cC_\aleph)$ be a global example, $\cC_\Upsilon$ be sampled uniformly among all size-$m$ subsets of $\cC_\aleph$ and $\Upsilon = \aleph\langle \cC_\Upsilon \rangle$. Let $\widehat{A}_{\aleph} = P_{\aleph,k}(\alpha)$. Let $\widehat{B}_\Upsilon$ be an estimate computed from the $l$-level expansion of $\cC_\Upsilon$. Then 
$$\expect{\left| \widehat{A}_\aleph - \widehat{B}_{\Upsilon} \right|} \leq 1-\left(\frac{m-k+1}{m}\right)^{k-1} + \sqrt{\frac{1+2\log 2}{4\lfloor m/k \rfloor}}.$$
In the case of model B, the same upper bound holds if we choose $k = |\textit{vars}(\alpha)|$ and $\widehat{A}_{\aleph} = Q_{\aleph}(\alpha)$.}
%{\em Let $m$ and $n$ be positive integers, $\alpha$ be a closed formula and $k$ be the width of local examples for Model A ($k = |\textit{vars}(\alpha)|$ for Model B, respectively). Let $\aleph = (\cA_\aleph,\cC_\aleph)$ be a global example, $\cC_\Upsilon$ be sampled uniformly among all size-$m$ subsets of $\cC_\aleph$ and $\Upsilon = \aleph\langle \cC_\Upsilon \rangle$. Let $\widehat{A}_{\aleph} = P_{\aleph,k}(\alpha)$ for Model A ($\widehat{A}_{\aleph} = Q_{\aleph}(\alpha)$ for Model B, respectively). Let $\widehat{B}_\Upsilon$ be an estimate computed from the $l$-level expansion of $\cC_\Upsilon$ (as described in the main text). Then 
%$$\expect{\left| \widehat{A}_\aleph - \widehat{B}_{\Upsilon} \right|} \leq 1-\left(\frac{m-k+1}{m}\right)^{k-1} + \sqrt{\frac{1+2\log 2}{4\lfloor m/k \rfloor}}.$$}
\begin{proof}
Let $\widehat{A}_\Upsilon = P_{\Upsilon,k}(\alpha)$ ($\widehat{A}_\Upsilon = Q_{\Upsilon}(\alpha)$, respectively). Then we have
\begin{align*}
    \expect{\left| \widehat{A}_\aleph - \widehat{B}_{\Upsilon} \right|} = \expect{\left| \widehat{A}_\aleph - \widehat{A}_\Upsilon + \widehat{A}_\Upsilon - \widehat{B}_{\Upsilon} \right|} \\
    \leq  \expect{\left|\widehat{A}_\Upsilon - \widehat{B}_{\Upsilon} \right|} + \expect{\left| \widehat{A}_\aleph - \widehat{A}_\Upsilon \right|} \\
    \leq 1-\left(\frac{m-k+1}{m}\right)^{k-1} + \sqrt{\frac{1+2\log 2}{4\lfloor m/k \rfloor}}
\end{align*}
where the last inequality follows from Proposition \ref{prop:difference} and Lemma \ref{prop:u-expected} for Model A, and Lemma \ref{prop:u-expected2} for Model B, respectively.
\end{proof}

\fi

%\fontsize{9.5pt}{10.5pt}

\bibliographystyle{aaai}
\bibliography{bibliography}

\end{document}